\let\hat\widehat
\let\tilde\widetilde
\newcommand{\bX}{\bm{X}}
\newcommand{\EE}{\mathbb{E}}
\newcommand{\RR}{\mathbb{R}}
\newcommand{\bx}{\boldsymbol{x}}
\newcommand{\Rom}[1]{\text{\uppercase\expandafter{\romannumeral #1\relax}}}
\newcommand{\norm}[1]{\| #1\|}
\newcommand{\normx}[1]{\left\|#1\right\|}
\newcommand{\innerx}[2]{\left\langle #1,#2 \right\rangle}
\newcommand{\rbr}[1]{\left(#1\right)}
\newcommand{\sbr}[1]{\left[#1\right]}
\newtheorem{theorem}{Theorem}[section]
\newtheorem{lemma}[theorem]{Lemma}
\newtheorem{corollary}[theorem]{Corollary}
\newtheorem{assumption}{Assumption}
\newtheorem{remark}[theorem]{Remark}
\title{Stochastic Gradient Descent for Nonparametric Additive Regression}
\author{Xin Chen\thanks{xc5557@princeton.edu} \ and \ Jason M. Klusowski\thanks{jason.klusowski@princeton.edu} \\[0.5em]
\small Department of Operations Research and Financial Engineering, Princeton University}
\date{}
\begin{document}

\maketitle

\begin{abstract}
This paper introduces an iterative algorithm for training nonparametric additive models that enjoys favorable memory storage and computational requirements. The algorithm can be viewed as the functional counterpart of stochastic gradient descent, applied to the coefficients of a truncated basis expansion of the component functions. We show that the resulting estimator satisfies an oracle inequality that allows for model mis-specification. In the well-specified setting, by choosing the learning rate carefully across three distinct stages of training, we demonstrate that its risk is minimax optimal in terms of the dependence on both the dimensionality of the data and the size of the training sample. Unlike past work, we also provide polynomial convergence rates even when the covariates do not have full support on their domain.
\end{abstract}

\noindent\textbf{Keywords:} iterative algorithm; gradient descent; nonparametric regression; minimax rate

\section{Introduction}

Suppose we obtain $n$ samples, each denoted as $(\bX_i, Y_i)$, where  $\bX_i = \big(X_i^{(1)},X_i^{(2)},\dots, X_i^{(p)}\big)^T  \in \RR^p$ is a $p$-dimensional covariate vector, and $Y_i \in \RR$ is a response value. In the general setting, a nonparametric regression model takes the form
\begin{equation}\label{nonpar}
\begin{aligned}
    &Y_i = f(\bX_i) + \varepsilon_i,
\end{aligned}
\end{equation}
where $\varepsilon_i $ is an unobserved noise variable. The goal is to derive an estimate for the regression function $f$ based on the available data.

\subsection{Nonparametric Additive Models}
It is well-known that when the regression function $f$ belongs to certain broad function classes, such as the H\"older smoothness class, the minimax optimal convergence rate for estimating $f$ is adversely affected by the dimension $p$ (see, e.g., \cite{gyorfi2002distribution}). To address this issue, it is typical to assume a specific functional form on the regression function, which reduces flexibility, but avoids the aforementioned negative effects on estimation. One widely adopted form is the nonparametric additive model \citep{580a479e-8bd3-310b-b7a7-84887bec0fed, hastie1990generalized}, which inherits the structural simplicity and interpretability characteristic of linear models, but without the rigidity of being parametric. Formally, the regression function $f$ admits the additive representation
\begin{align}\label{additive}
    f(\bX_i) = \alpha +  \sum_{k=1}^pf_k\left(X^{(k)}_i\right),
\end{align}
where $\alpha \in \RR$ is a constant, and $f_1, f_2,\dots, f_p$ is a collection of $p$ univariate component functions. Throughout this paper, we focus on the models \eqref{nonpar} and \eqref{additive}. In order to establish identifiability for the additive model, we assume throughout that each component function $f_k $ is centered (with respect to Lebesgue measure) in the sense that $\int f_k(x) dx = 0$ for each $k=1,2,\dots,p$.

There have been a number of influential papers dedicated to fitting the nonparametric additive model \eqref{additive}. For example, the backfitting algorithm was proposed by \cite{97272c9f-415d-38e1-b5c1-b5216f1a728e}, with its properties subsequently studied by \cite{7b56331a-3c32-333b-a3a9-80650cf8a6e6}. The local scoring backfitting method was introduced by \cite{hastie1990generalized}, and a local likelihood estimator was proposed by \cite{27f0e47c-7e6a-3f96-87cd-35773773c7ae}. These algorithms are commonly referred to as \textit{batch} methods, given that they entail fitting the model on the entire data set at once. Consequently, such methods demand substantial memory and computational resources for data analysis, posing a significant challenge when dealing with streaming or large-scale data sets. 

Recently, \cite{doi:10.1080/01621459.2023.2182213} adapted the smooth backfitting method \citep{40bc9d37-04e6-331a-92b0-fcf494a7785c, 29938989-35ba-3682-81ec-8d6d28e262dc} for use in an online environment, where data is processed in a streaming manner and estimators are updated with each new data point. This method, however, involves resolving a set of nonlinear integral equations through a dual iteration process and requires the dynamic selection of bandwidths for each component function from a group of candidates. These ingredients could potentially lead to time-intensive computations, particularly when the dimension is large (see Section \ref{subsec:univariate} for further details). Furthermore, the relationship between the convergence rate of their method and the dimension $p$ was not made explicit.

\subsection{Stochastic Gradient Descent}

Stochastic Gradient Descent (SGD) is a cornerstone optimization algorithm in machine learning, well known for its computational efficiency and its ability to enhance generalization, particularly in complex tasks such as training deep neural networks \citep{Goodfellow-et-al-2016}. Theoretical properties of SGD have been extensively studied in the context of linear regression \citep{jain2018parallelizing, ge2019step}, including in overparameterized settings \citep{zou2021benefits, zou2021benign, wu2022last}. More recently, SGD-based methods have attracted considerable attention in nonparametric regression, where the regression function is assumed to belong to a Reproducing Kernel Hilbert Space (RKHS) and estimators are constructed via kernel methods \citep{4406a3ce-1e59-3ed8-8c25-9f7d75b81f14, 6842642, article}. \citet{zhang2025learning} also investigates the minimax optimality of kernel-based SGD under certain model misspecifications. Despite their popularity, kernel-based SGD methods often suffer from substantial computational and memory burdens (see Section~\ref{subsec:univariate} for further details).

To alleviate some of these difficulties, \cite{zhang2022sieve} assumed the regression function lives in a Sobolev ellipsoid (also a RKHS), and circumvented the use of kernels by instead learning the coefficients of an orthogonal basis expansion of $ f $. They proposed an interesting online scheme, \emph{Sieve-SGD}, and showcased its minimal memory storage requirements (up to a logarithmic factor), low computational complexity, and optimal theoretical performance (in terms of the sample size). In a subsequent work, \citet{zhang2023online} further investigated parameter selection, providing examples in the context of Sieve-SGD. Although \citet{zhang2022sieve} also proposed an extension of Sieve-SGD to nonparametric additive models, they did not offer explicit guidance on parameter choices, and the corresponding theoretical guarantees were not established.

\subsection{Contributions}

In this paper, we propose an online estimator based on SGD in the context of nonparametric additive regression.  Our estimator can be regarded as the functional counterpart of  stochastic gradient descent, applied to the coefficients of a truncated basis expansion of the component functions. To reflect this operational methodology, we call our estimator the \textit{functional stochastic gradient descent (F-SGD)} estimator.
We demonstrate that F-SGD exhibits optimal theoretical performance with favorable memory storage requirements and  computational complexity. The contributions of this paper can be summarized as follows:
\begin{itemize}

\item On the theoretical front, we establish the minimax optimality of F-SGD under the nonparametric additive model. Specifically, when the regression function $f$  lies in the space of the sums of $p$ univariate Sobolev ellipsoids (denoted as $\mathcal{H}_p(s)$ in Section \ref{subsec:space}), we show that the Mean Squared Error (MSE) achieved by F-SGD is minimax optimal in terms of \emph{both} the dimensionality $p$ and the sample size $n$. In contrast, prior work has only demonstrated optimality with respect to $n$ \citep{zhang2022sieve,4406a3ce-1e59-3ed8-8c25-9f7d75b81f14, xue2022dynamic, doi:10.1080/01621459.2023.2182213, quan2024optimal}. 

More generally, when $f$ belongs to a space of sums of $p$ square-integrable univariate functions (denoted as $\mathcal{F}_p$ in Section \ref{subsec:space}), we prove that F-SGD satisfies an oracle inequality that accommodates model mis-specification. The resulting MSE bound consists of two components: The approximation error of $f$ using functions from $\mathcal{H}_p(s)$ and the estimation error within $\mathcal{H}_p(s)$ itself. Our analysis is built on a recursive inequality for the MSE derived from the SGD update rule, which proves significantly more tractable than previous SGD-based approaches \citep{zhang2022sieve,4406a3ce-1e59-3ed8-8c25-9f7d75b81f14}. 

In addition, we extend prior results by relaxing a common assumption that requires the input distribution to have full support over its domain (see detailed discussion in Section \ref{sec:asssumption}). Our analysis demonstrates that F-SGD achieves polynomial convergence rates even when the input density is zero in certain regions.

\item On the algorithmic front, our proposed method closely resembles conventional 
SGD, making it substantially simpler and more straightforward than online 
smooth backfitting \citep{yang2023online}. Moreover, when the dimension $p$ is large, F-SGD achieves 
significant improvements in both memory and computational complexity compared 
to online smooth backfitting.  Unlike past kernel SGD methods for nonparametric regression \citep{zhang2022sieve,4406a3ce-1e59-3ed8-8c25-9f7d75b81f14}, F-SGD streamlines the training process by excluding the usual Polyak averaging step and, in contrast to Sieve-SGD \citep{zhang2022sieve}, eliminating the need for a distinct learning rate per basis function, which may depend on unknown parameters.  When $f$ belongs to a Sobolev ellipsoid, F-SGD exhibits near-optimal space complexity similar to Sieve-SGD, but with an improvement by a logarithmic factor. The computational cost is also improved by a logarithmic factor. In comparison to kernel SGD methods, both space and computational complexity are improved by polynomial factors (see detailed discussion in Section \ref{subsec:univariate}). Moreover, thanks to its simplicity, F-SGD holds the potential for extensions, such as an online version of Lepski's method for adaptation to unknown smoothness (though a formal theoretical investigation of this extension is beyond the scope of this paper).

\end{itemize}

\noindent\textbf{Organization.} The remainder of the paper is organized as follows. In Section \ref{sec:pre}, we define some function spaces and establish several fundamental assumptions necessary for our model. In Section \ref{subsec:F-SGD}, we introduce our proposed F-SGD estimator. In Section \ref{sec:main}, we study its theoretical performance and elucidate its advantages through comparisons with contemporaneous methods. In Section \ref{sec:numerical}, we conduct a few numerical studies. Extensions of F-SGD are discussed in Section \ref{sec:discuss}. Proofs of the main results are provided in Section \ref{appdx:proof}, except for Theorem \ref{thm:polynomial}, which appears in the Supplementary Material \citep{chen2025supplement}.

\noindent\textbf{Notation.} Throughout th tis paper, we assume the pair $(\bX,Y)$ has joint distribution $P_{\bX,Y}$, and $\bX$ has marginal distribution $P_{\bX}$. For functions $f,g: \RR^p \to \RR$,  we let
\begin{align*}
    \|f\|^2 \coloneqq \int f^2(\bX)dP_{\bX}, \qquad \langle f,g\rangle \coloneqq  \int f(\bX)g(\bX)dP_{\bX}.
\end{align*}
We denote $a_n = O(b_n)$ if there exists a constant $C>0$ independent of $n$ such that $a_n \leq Cb_n$ for all $n$. The notation $a_n = \Theta(b_n)$ means $a_n = O(b_n)$ and $b_n = O(a_n)$. We use $\lfloor x \rfloor$ to represent the largest integer not exceeding $x$ and $\lceil x \rceil$ to denote the smallest integer not falling below $x$. For a vector $\bX \in \RR^p$, we use $X^{(k)}$ to denote the $k$-th component of $\bX$. The $\|\cdot\|_\infty$ norm for a continuous function $f$ is defined as $\|f\|_\infty \coloneqq \sup_{x \in \mathcal{X}}f(x)$ where $\mathcal{X}$ is the compact domain of $f$. 
\section{Preliminaries}\label{sec:pre}

In this section, we begin by defining some useful function spaces and then describing several fundamental assumptions necessary for our theory.

\subsection{Function Spaces} \label{subsec:space}
We denote $\mathcal{L}^2$ as the space of square-integrable univariate functions. To streamline our notation, we  refrain from explicitly specifying its domain at this point. We will later assume the domain is the unit interval $[0,1]$. We say a collection of functions $\{\psi_j: j=1,2,\dots\} \subset \mathcal{L}^2$ is an orthonormal basis of $\mathcal{L}^2$ (with respect to the Lebesgue measure) if 
$$
\int \psi_i(x)\psi_j(x)dx = \delta_{ij}
$$
where $\delta_{ij}$ is the Kronecker delta. Additionally, we say 
\begin{enumerate}
    \item[(i)] the orthonormal basis $\{\psi_j\}$ is centered if 
    $$
    \int \psi_j(x)dx = 0, \quad j=1,2,\dots
    $$
    \item[(ii)] the orthonormal basis $\{\psi_j\}$ is complete if for any $f \in \mathcal{L}^2$ there exists a unique sequence $\{\theta_{f,j}\}_{j=1}^\infty \in \ell^2$ such that
\begin{align*}
    f = \sum_{j=1}^\infty \theta_{f,j}\psi_j,
\end{align*}
where $\ell^2$ is the space of square convergent series.
\end{enumerate}
In $\mathcal{L}^2[0,1]$, i.e., the square-integrable function space over the interval $[0,1]$, a well-known example of a complete orthonormal basis is the trigonometric basis where $\psi_0(x) = 1 $, $\psi_{2k-1}(x) = \sqrt{2}\sin(2\pi kx)$, and $ \psi_{2k}(x) = \sqrt{2}\cos(2\pi kx)$ for $k=1,2,\dots$. By excluding the constant basis function $\psi_0$, the remaining trigonometric basis elements together form a centered orthonormal basis. Another frequently-used  complete orthonormal basis is the wavelet basis \citep{DBLP:books/daglib/0035708}.

Given an orthonormal basis $\{\psi_j\}$ (which may not necessarily be complete), we define $\mathcal{F}_1(\{\psi_j\})$ as a set of univariate square-integrable functions that possess expansions with respect to the basis $\{\psi_j\}$. In other words,
$$
\mathcal{F}_1(\{\psi_j\}) \coloneqq \Bigg\{f = \sum_{j=1}^\infty \theta_{f,j}\psi_j : \sum_{j=1}^\infty \theta_{f,j}^2 < \infty\Bigg\}.
$$
It is worth noting that when $\{\psi_j\}$ is complete, $\mathcal{F}_1(\{\psi_j\})$ is equivalent to $\mathcal{L}^2$.  Furthermore, we define the Sobolev space $\mathcal{H}_1(s, \{\psi_j\})$ as
$$
\mathcal{H}_1(s, \{\psi_j\}) \coloneqq \Bigg\{f = \sum_{j=1}^\infty \theta_{f,j}\psi_j : \sum_{j=1}^\infty (j^s\theta_{f,j})^2 < \infty\Bigg\},
$$
and also the Sobolev ellipsoid $\mathcal{W}_1(s,Q,\{\psi_j\})$ as:
$$
\mathcal{W}_1(s,Q, \{\psi_j\}) \coloneqq \Bigg\{f = \sum_{j=1}^\infty \theta_{f,j}\psi_j : \sum_{j=1}^\infty (j^s\theta_{f,j})^2 < Q^2\Bigg\}.
$$
It is easy to see that $\mathcal{W}_1(s,Q, \{\psi_j\}) \subset \mathcal{H}_1(s, \{\psi_j\})$. For $f \in \mathcal{H}_1(s, \{\psi_j\})$, we define the Sobolev norm as  $\|f\|_s^2 \coloneqq \sum_{j=1}^\infty (j^s\theta_{f,j})^2$. In the Sobolev ellipsoid, the parameter $s$ characterizes the smoothness of the function class. At this point, we assume that $s$ is known in advance. Unless otherwise specified, we only require $s > 0$, and it can be any real number. We will discuss the scenario in which $s$ is unknown in Section \ref{subsec:Lepski}. Sobolev spaces and Sobolev ellipsoids are widely explored spaces of interest, particularly in the context of batch nonparametric regression  \citep{gyorfi2002distribution,DBLP:books/daglib/0035708}. 
We consider the following multivariate (additive) extensions of these univariate function classes:
\begin{align*}
    &\mathcal{F}_p(\{\psi_{j1}\},\{\psi_{j2}\},\dots,\{\psi_{jp}\})\\
    &\coloneqq~ \bigg\{f: \RR^p \to \RR : f = \alpha +\sum_{k=1}^pf_k, \text{ where } \alpha\in \RR,\: f_k \in \mathcal{F}_1(\{\psi_{jk}\}),\: k=1,2,\dots,p    \bigg\},
\end{align*}
where $\{\psi_{j1}\},\{\psi_{j2}\},\dots,\{\psi_{jp}\}$ are $p$ centered orthonormal bases provided in advance. When these bases are chosen to be trigonometric, excluding the constant basis function, the class \\$\mathcal{F}_p(\{\psi_{j1}\},\{\psi_{j2}\},\dots,\{\psi_{jp}\})$ would be equivalent to the sum of $p$ separate $\mathcal{L}^2$ spaces. Specifically,
\begin{align*}
    \mathcal{F}_p(\{\psi_{j1}\},\{\psi_{j2}\},\dots,\{\psi_{jp}\}) = \left\{f = f_1+f_2+\dots+f_p : f_k \in \mathcal{L}^2, \text{ where } k=1,2,\dots,p  \right\}.
\end{align*}
In the same manner, we can define $\mathcal{H}_p(s,\{\psi_{j1}\},\{\psi_{j2}\},\dots,\{\psi_{jp}\})$ and \\ $\mathcal{W}_p(s,Q,\{\psi_{j1}\},\{\psi_{j2}\},\dots,\{\psi_{jp}\})$ by specifying that the component functions $f_k $ belong to \\ $ \mathcal{H}_1(s, \{\psi_{jk}\})$ and $ \mathcal{W}_1(s,Q, \{\psi_{jk}\})$, respectively, for each $k=1,2,\dots,p$. For ease of exposition, we restrict ourselves to the setting where each component function $f_k$ has the same smoothness parameter $s$, but all of the forthcoming theory can be generalized to accommodate heterogeneous smoothness across the component functions. When it is clear which basis we are using, we will denote  $\mathcal{F}_p(\{\psi_{j1}\},\{\psi_{j2}\},\dots,\{\psi_{jp}\})$ simply by $\mathcal{F}_p$. The same applies to $\mathcal{H}_p(s)$ and $\mathcal{W}_p(s,Q)$.

\subsection{Assumptions}\label{sec:asssumption}
Having introduced the necessary function spaces, we now proceed to outline several key assumptions.

\begin{assumption}\label{asmp:1}
    The data $(\bX_i, Y_i)_{i \in \mathbb{N}} \in [0, 1]^p \times \mathbb{R}$ are drawn i.i.d. from a distribution $P_{\bX,Y}$. 
\end{assumption} 

\begin{assumption}\label{asmp:2} 
The marginal distribution of $\bX$, denoted by $P_{\bX}$, is absolutely continuous with respect to the Lebesgue measure on $[0, 1]^p$. Let $p_{\bX}$ denote its probability density function. For some constants $C_1, C_2$ independent of $p$, such that $0<C_1<C_2<\infty$,
\begin{align*}
    C_1 \leq p_{\bX}(\bx) \leq C_2 \quad \text{for all }\bx \in [0, 1]^p.
\end{align*}
\end{assumption}

\setcounter{assumption}{1} 
\renewcommand{\theassumption}{2$'$} 
\begin{assumption}\label{asmp:2'}
    For some constant $C_2$ independent of $p$, such that $0 < C_2 < \infty$,
\begin{align*}
  p_{\bX}(\bx) \leq C_2 \quad \text{for all }\bx \in [0, 1]^p.
\end{align*}
\end{assumption}

\renewcommand{\theassumption}{\arabic{assumption}}
\begin{assumption}\label{asmp:3}
     Let $\{\psi_{j1}\},\{\psi_{j2}\},\dots,\{\psi_{jp}\}$ be $p$ centered and uniformly bounded orthonormal bases on $[0,1]$, where $\|\psi_{jk}\|_\infty \leq M $, $M\geq 1$,  for each  $k=1,2,\dots,p$ and $j=1,2,\dots$. The regression function $f$ belongs to $ \mathcal{F}_p(\{\psi_{j1}\},\{\psi_{j2}\},\dots,\{\psi_{jp}\})$.
\end{assumption}

\begin{assumption}\label{asmp:4}
    The noise $\varepsilon$ satisfies $ \EE[\varepsilon \mid \bX] = 0 $ and $ \EE[\varepsilon^2 \mid \bX] \leq \sigma^2 $, almost surely, for some $ \sigma > 0 $. 
\end{assumption}
 
We make several comments on the assumptions. 
Assumption \ref{asmp:2} imposes upper and lower bounds on the joint density of $\bX$. This condition is commonly assumed in other online methods; for example, kernel SGD (Assumption 2 in \cite{4406a3ce-1e59-3ed8-8c25-9f7d75b81f14}), Sieve-SGD (Assumption 2 in \cite{zhang2022sieve}), and online smooth backfitting (Assumption 1 in \cite{doi:10.1080/01621459.2023.2182213}). However, in large dimensional settings with dependent covariates, it can be difficult to ensure that the density is uniformly bounded away from zero across all of $[0,1]^p$; in fact, the density may vanish on certain regions of the domain. To reflect this, we also consider a relaxed version, stated as Assumption \ref{asmp:2'}, which does not require a uniform lower bound. 

Although the stronger Assumption \ref{asmp:2} is required to establish optimal convergence rates, our simulationsâ€”similar to those in \cite{zhang2022sieve}â€”demonstrate that F-SGD continues to achieve near-optimal rates even when the support of $p_{\bX}$ is strictly smaller than $[0,1]^p$. Moreover, in Section \ref{subsec:univariate}, we provide a formal result showing that F-SGD achieves polynomial convergence rates even when the input density is not bounded away from zeroâ€”that is, under Assumption \ref{asmp:2'}.

Assumption \ref{asmp:3} is similar to that of \cite{zhang2022sieve}, though we only require $f$ to be in $\mathcal{F}_p$ rather than in a Sobolev ellipsoid. Note that the boundedness assumption is satisfied for the trigonometric basis described in Section \ref{subsec:space} with $ M = \sqrt{2} $.
Compared to \cite{doi:10.1080/01621459.2023.2182213}, who also considered an additive model, we do not require each component function to be twice continuously differentiable. In Section \ref{subsec:additive}, we will establish an oracle inequality within the hypothesis space $\mathcal{F}_p$. This inequality will recover the optimal rate when we restrict $f$ to be within the the space of the sum of $p$ univariate Sobolev ellipsoids. 

Assumption \ref{asmp:4} on the noise distribution is also more relaxed compared to previous literature. For example, \cite{zhang2022sieve} assume that either $\varepsilon$ is independent of $\bX$ and has finite second moment, or is almost surely bounded. \cite{doi:10.1080/01621459.2023.2182213} imposed higher-order moment conditions on $\varepsilon$, in addition to assuming the conditional variance $\text{Var}(Y \mid \bX) = \EE[\varepsilon^2 \mid \bX]$ is strictly positive and twice continuously differentiable with respect to $\bX$. In contrast, our assumption only involves boundedness of the conditional variance of $Y$ given $\bX$.

Like \cite{zhang2022sieve}, we also point out that our assumptions,  in comparison to other kernel methods for nonparametric estimation \citep{4406a3ce-1e59-3ed8-8c25-9f7d75b81f14, 6842642,article}, are more straightforward. The assumptions in those papers typically involve verifying certain conditions related to the spectrum of the RKHS covariance operator. This task can be challenging due to the involvement of the unknown distribution $P_{\bX}$ in the covariance operator. We encourage readers to see \cite{zhang2022sieve} for a nice discussion on these differences.

\section{Functional Stochastic Gradient Descent}\label{subsec:F-SGD}

In this section, we formulate an estimator based on SGD for the   nonparametric additive model \eqref{additive}. Throughout this section, we grant Assumption \ref{asmp:3}, i.e., 
$$f\in \mathcal{F}_p(\{\psi_{j1}\},\{\psi_{j2}\},\dots,\{\psi_{jp}\}).$$ 
We proceed under the assumption that the set of centered basis functions $\{\psi_{jk}\}$, for each $k=1,2,\dots,p$, is pre-specified. The population level objective is to solve
\begin{align*}
    \min_{f \in \mathcal{F}_p}\frac{1}{2} \EE \left[\left( Y - f(\bX)
 \right)^2\right] = \min_{\alpha,\, \boldsymbol{\beta}}  L(\alpha, \boldsymbol{\beta}),
\end{align*}
where
\begin{align*}
     L(\alpha, \boldsymbol{\beta}) \coloneqq \frac{1}{2}\EE \left[\left( Y - \alpha - \sum_{k=1}^p\sum_{j=1}^{\infty}\beta_{jk}\psi_{jk}\left(X^{(k)} \right)
 \right)^2\right],
\end{align*}
using a gradient-based optimization algorithm.
Here $$\boldsymbol{\beta} = (\beta_{11}, \beta_{21}, \dots \beta_{12}, \beta_{22}, \dots \beta_{1p}, \beta_{2p}, \dots)^T$$ represents the infinite-dimensional parameter vector. At each iteration step $ i $,
we can compute the (infinite-dimensional) gradient to obtain the gradient descent updates:
\begin{align*}
&\boldsymbol{\beta}_i = \boldsymbol{\beta}_{i-1} - \gamma_i \nabla_{\boldsymbol{\beta}}L(\alpha, \boldsymbol{\beta})\vert_{\alpha = \alpha_{i-1},\; \boldsymbol{\beta}=\boldsymbol{\beta}_{i-1}},\\
& \alpha_i = \alpha_{i-1} - \gamma_i\nabla_{\alpha}L(\alpha, \boldsymbol{\beta})\vert_{\alpha = \alpha_{i-1},\; \boldsymbol{\beta}=\boldsymbol{\beta}_{i-1}}.
\end{align*}
This can be translated into an update rule in the function space $\mathcal{F}_p$ by setting $ \tilde{f}_i(\bX) = \alpha_i + \boldsymbol{\psi}(\bX)^T\boldsymbol{\beta}_i $, where $$\boldsymbol{\psi}(\bX) = \big(\psi_{11}\big(X^{(1)}\big), \psi_{21}\big(X^{(1)}\big), \dots \psi_{12}\big(X^{(2)}\big), \psi_{22}\big(X^{(2)}\big), \dots \psi_{1p}\big(X^{(p)}\big), \psi_{2p}\big(X^{(p)}\big), \dots\big)^T$$ is the corresponding infinite-dimensional basis vector. 

We obtain the functional update:
\begin{align}
\tilde{f}_i(\bX) = \tilde{f}_{i-1}(\bX) - \gamma_i \left( \nabla_{\alpha}L(\alpha, \boldsymbol{\beta})\vert_{\alpha = \alpha_{i-1},\; \boldsymbol{\beta}=\boldsymbol{\beta}_{i-1}} + 
 \boldsymbol{\psi}(\bX)^T \nabla_{\boldsymbol{\beta}}L(\alpha, \boldsymbol{\beta})\vert_{\alpha = \alpha_{i-1},\; \boldsymbol{\beta}=\boldsymbol{\beta}_{i-1}}\right). \label{functional update}
\end{align}
In practice, computing $\nabla_{\alpha}L(\alpha, \boldsymbol{\beta})$ and  $\nabla_{\boldsymbol{\beta}}L(\alpha, \boldsymbol{\beta})$ directly is infeasible, as they are unknown population quantities. Stochastic gradient descent uses the latest data point for estimating these gradients.
That is, unbiased estimates of the partial derivatives take the form: 
\begin{align*}
    &\hat{\nabla}_{\alpha} L(\alpha, \boldsymbol{\beta})\vert_{\alpha = \alpha_{i-1},\; \boldsymbol{\beta}=\boldsymbol{\beta}_{i-1}} = -\left(Y_i-\tilde{f}_{i-1}(\bX_i)\right),\\
    &\hat{\nabla}_{\boldsymbol{\beta}} L(\alpha, \boldsymbol{\beta})\vert_{\alpha = \alpha_{i-1},\; \boldsymbol{\beta}=\boldsymbol{\beta}_{i-1}} = -\left(Y_i-\tilde{f}_{i-1}(\bX_i)\right)\boldsymbol{\psi}(\bX_i).
\end{align*}
Plugging these estimates into the update \eqref{functional update}, we obtain the functional update
\begin{align*}
    \tilde f_i(\bX) = \tilde f_{i-1}(\bX) + \gamma_i\left(Y_i - \tilde f_{i-1}\left(\bX_i\right)\right) \left(1+\sum_{k=1}^p\sum_{j=1}^{\infty}\psi_{jk}\left(X_i^{(k)}\right)\psi_{jk}\left(\bX\right)\right),
\end{align*}
with initialization $\tilde f_0 = 0$. However, the convergence of $\tilde f_i$ is not guaranteed even if all the basis functions $\psi_{jk}$ are bounded. To address this issue, inspired by the usual projection estimator for Sobolev spaces \citep{DBLP:books/daglib/0035708}, we project $\tilde f_i$ onto a finite-dimensional space, namely, the linear span of $ \{\psi_0, \, \psi_{jk} : j = 1, 2, \dots, J_i \; \text{and} \; k = 1, 2, \dots, p \} $, $ J_i \in \mathbb{N} $ and $ \psi_0 \equiv 1$, resulting in the following recursion:
\begin{align}\label{update:l2loss}
    &\hat{f}_i(\bX) = \hat{f}_{i-1}(\bX) + \gamma_i\left(Y_i - \hat{f}_{i-1}\left(\bX_i\right)\right) \left(1+\sum_{k=1}^p\sum_{j=1}^{J_i}\psi_{jk}\left(X_i^{(k)}\right)\psi_{jk}\left(\bX\right)\right),
\end{align}
with  initialization $\hat f_0 = 0$. We refer to $\hat f_i$ as the \textit{functional stochastic gradient descent (F-SGD)} estimator of  $f$. Here, $\gamma_i$ denotes the learning rate and $J_i$ is the dimension of the projection. Both parameters control the bias-variance tradeoff and must be tuned accordingly. The choices of $\gamma_i$ and $J_i$ will be elaborated upon in Section \ref{subsec:additive}. Additionally, in Section \ref{subsec:additive} and \ref{subsec:univariate}, we will demonstrate that this estimator achieves optimal statistical performance, while maintaining modest computational and memory requirements.

Recently, \cite{zhang2022sieve} were also motivated by SGD, aptly naming their procedure \emph{sieve stochastic gradient descent (Sieve-SGD)}. In the additive model context, they suggested the update rule: 
\begin{align}
   &  \tilde{f}_i(\bX) = \frac{i}{i+1}\tilde{f}_{i-1}(\bX) + \frac{1}{i+1} \tilde g_i(\bX) \label{zhang1}\\
    & \tilde g_i(\bX) = \tilde g_{i-1}(\bX) + \gamma_i\left(Y_i - \tilde g_{i-1}\left(\bX_i\right)\right)\left(1+\sum_{k=1}^p\sum_{j=1}^{J_{ik}}t_{jk}\psi_{jk}\left(X_i^{(k)}\right)\psi_{jk}\left(\bX\right)\right), \label{zhang2}
\end{align}
with initialization $\tilde g_0 = \tilde f_0 = 0$,  where $ t_{jk} $ is a (non-adaptive) component-specific learning rate that may involve information on the  smoothness level $ s $, e.g., $ t_{j} = j^{-2\omega} $, where $ 1/2 < \omega < s $.
By comparing \eqref{update:l2loss} with \eqref{zhang1} and \eqref{zhang2}, we see that the component-specific learning rate $ t_{jk} $ and the Polyak averaging step \eqref{zhang1} are both omitted with F-SGD. 
Due to the less common nature of a non-adaptive, component-specific learning rate in standard SGD implementations, our F-SGD estimator more closely mirrors the characteristics of vanilla SGD. The additional simplicity of our estimator makes it not only more straightforward to implement but also facilitates a more nuanced theoretical analysis, enabling us to understand the impact of both the dimensionality and sample size on performance. We conduct a comprehensive comparison between our F-SGD estimator and several prior estimators, including Sieve-SGD, in Section \ref{subsec:univariate}.

\section{Main Results}\label{sec:main}
In this section, we provide theoretical guarantees for F-SGD \eqref{update:l2loss}.
In Section \ref{subsec:additive}, we establish an oracle inequality when $f \in \mathcal{F}_p$ and recover the minimax optimal rate when $f \in \mathcal{W}_p$. We also derive polynomial convergence rates when the input density is not bounded away from zero. In Section \ref{subsec:univariate}, we compare F-SGD with extant methods and elucidate its advantages.

\subsection{Oracle Inequality}\label{subsec:additive}
We begin by presenting an oracle inequality that depends on both the dimensionality $p$ and the sample size $n$, precisely characterizing the trade-off in estimating $f$. This inequality balances the capability of candidate component functions $g_k$ to approximate the $f_k$ against their respective Sobolev norms, $\|g_k\|_s$. 

\begin{theorem}\label{theorem:l2loss}
Suppose Assumptions \ref{asmp:1}-\ref{asmp:4} hold. Furthermore, assume $n \geq C_0 p^{1+1/(2s)}$ where $C_0 > 1$ is a constant. Let $A_1$, $ A_2 $, and $ B$ be constants such that $A_1 = (2s+1)A_2 $, $ A_2 \geq \frac{2}{C_1}$, and $B \leq \frac{1}{4C_2M^2A_2^2}$. Assume $p \geq \frac{1}{B^{2s}}$.
Set $\gamma_i$ and $J_i$ according to three stages of training:
\begin{enumerate}
    \item[i] When $1\leq i\leq \frac{p}{B}$, set $\gamma_i = J_i = 0$.
    \item[ii] When $\frac{p}{B} < i\leq p^{1+1/(2s)}$, set $\gamma_i = \frac{A_1}{i}$ and $J_i = \lceil \frac{Bi}{p}\rceil$.
    \item[iii] When $p^{1+1/(2s)} < i \leq n$, set $\gamma_i = \frac{A_2}{i}$ and $J_i = \lceil Bi^{\frac{1}{2s+1}}\rceil$.
\end{enumerate}
Then the MSE of F-SGD \eqref{update:l2loss}, initialized with $\hat f_0 = 0$, satisfies
\begin{align}\label{thm_ineq1}
    \EE \left[\norm{\hat{f}_n - f}^2 \right] \leq C\inf_{g\in \mathcal{H}_p(s)} \Bigg\{ \|g-f\|^2 + \left(p\sigma^2+\|f\|^2+\sum_{k=1}^p\|g_k\|^2_s \right) n^{-\frac{2s}{2s+1}}\Bigg\}.
\end{align}
Here $g_1,g_2,\dots,g_p$ represent the components functions of $g$, as $g(\bX) = \tau + \sum_{k=1}^pg_k(X^{(k)})$  for $\tau \in \RR$, and  $C = C(C_0,C_1,C_2,M,s)$ is a constant. 
\end{theorem}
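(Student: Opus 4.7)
The plan is to derive a one-step recursive inequality of the form
\begin{equation*}
\EE\|\hat{f}_i - g\|^2 \leq (1 - c_1 \gamma_i)\,\EE\|\hat{f}_{i-1} - g\|^2 + c_2 \gamma_i \|g - P_i g\|^2 + c_3 \gamma_i^2 (1 + pJ_i)(\sigma^2 + \|f\|^2)
\end{equation*}
for an arbitrary but fixed oracle $g \in \mathcal{H}_p(s)$, and then unroll it stage by stage, taking the infimum over $g$ only at the very end. Here $P_i g$ denotes the truncation of each component $g_k$'s basis expansion to its first $J_i$ terms. Working with a fixed $g$ first is natural because the F-SGD iteration itself never references any oracle.

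To derive the recursion, I would substitute the update \eqref{update:l2loss} into $\|\hat{f}_i - g\|^2$ and condition on $(\bX_i, Y_i)$. Writing $Y_i - \hat{f}_{i-1}(\bX_i) = (f - \hat{f}_{i-1})(\bX_i) + \varepsilon_i$, Assumption~4 eliminates the noise cross-term, and the remaining cross-term becomes $-2\gamma_i\langle f - \hat{f}_{i-1},\, T_i(\hat{f}_{i-1} - g)\rangle$, where
\begin{equation*}
T_i h(\bx) := \int\!\Bigl(1 + \sum_{k=1}^p\sum_{j=1}^{J_i} \psi_{jk}(x^{(k)})\psi_{jk}(y^{(k)})\Bigr) h(\boldsymbol{y})\, dP_{\bX}(\boldsymbol{y}).
\end{equation*}
The key algebraic observation is that under Lebesgue measure $T_i$ would be the orthogonal projection onto the linear span $V_i$ of $\{1\} \cup \{\psi_{jk}: j \leq J_i,\, k \leq p\}$; under $P_{\bX}$, Assumption~2 pins its eigenvalues on $V_i$ between $C_1$ and $C_2$. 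This converts the cross-term into a genuine contraction $-2c_1\gamma_i\|\hat{f}_{i-1} - g\|^2$ plus slack terms bounded by $\|g - P_i g\|^2$ and $\|f - P_i f\|^2$. The remaining squared term yields variance of order $\gamma_i^2 (1+pJ_i)(\sigma^2 + \|f\|^2)$, using the uniform bound $\|\psi_{jk}\|_\infty \leq M$ to control $\EE\|K_i(\bX_i,\cdot)\|^2$.

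Given the recursion, the three-stage analysis is essentially routine. Stage~i leaves $\hat{f}_i = 0$ identically, so $\EE\|\hat{f}_{p/B} - g\|^2 = \|g\|^2$ feeds into stage~ii. In stage~ii the choice $\gamma_i = A_1/i$ produces a contraction product $\prod_{j \leq i}(1 - c_1 A_1/j) \asymp (j_0/i)^{c_1 A_1}$; the exponent identity $A_1 = (2s+1)A_2$ is exactly what forces the leftover from stage~i, together with the variance accumulated under the linearly-ramping $J_i = \lceil Bi/p\rceil$, to decay to an $O(n^{-2s/(2s+1)})$ contribution by $i = p^{1+1/(2s)}$. In stage~iii, the schedule $J_i \asymp i^{1/(2s+1)}$ and $\gamma_i = A_2/i$ balances bias ($\sim J_i^{-2s}\sum_k\|g_k\|_s^2$) against accumulated variance ($\sum_{i\leq n}\gamma_i^2 p J_i \sigma^2 \asymp p\sigma^2 n^{-2s/(2s+1)}$), producing the advertised minimax-optimal rate. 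Combining with $\|\hat{f}_n - f\|^2 \leq 2\|\hat{f}_n - g\|^2 + 2\|g - f\|^2$ and taking the infimum over $g \in \mathcal{H}_p(s)$ yields \eqref{thm_ineq1}.

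The hardest step will be pinning down the contraction constant precisely enough to survive growing $p$: the operator $T_i$ is only approximately a projection under $P_{\bX}$, and its range has dimension on the order of $pJ_i$. Keeping the contraction strict enough that stage~iii's recursion produces a geometric sum dominated by its late terms is exactly what forces the explicit constraints $B \leq 1/(4 C_2 M^2 A_2^2)$ and $A_2 \geq 2/C_1$; the accompanying condition $p \geq 1/B^{2s}$ is what lets the stage~ii schedule make contact with the stage~iii schedule at $i = p^{1+1/(2s)}$. A related subtlety is the oracle aspect: since $g$ need not lie in $V_i$, its Sobolev norm $\|g_k\|_s$ enters only through the bias $\|g - P_i g\|^2 \leq C_2 J_i^{-2s}\sum_k\|g_k\|_s^2$, and matching this bias to the stage~iii schedule is what makes $\sum_k\|g_k\|_s^2$ rather than $\sum_k\|g_k\|^2$ appear on the right-hand side.
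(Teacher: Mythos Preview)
Your outline is structurally close to the paper's proof---derive a one-step recursion for the MSE and unroll it through the three stages via a telescoping product---but differs in the anchoring and contains one real slip.

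The paper tracks $e_i = \EE\|\hat f_i - f\|^2$, not $\EE\|\hat f_i - g\|^2$; the oracle $g$ enters only when bounding the truncation bias $\|f - \check f_i\|^2$ (with $\check f_i$ the Lebesgue projection of $f$ onto $V_i$) by routing through $g$ and its truncation $\check g_i$. Anchoring at $f$ makes the cross-term exactly $-2\gamma_i y_{i-1}^2$ with $y_{i-1}^2 = |\langle \hat f_{i-1}-f,1\rangle|^2 + \sum_{k}\sum_{j\leq J_i}|\langle \hat f_{i-1}-f,\psi_{jk}\rangle|^2$, and the paper lower-bounds $y_{i-1}^2$ by a Cauchy--Schwarz argument against $\hat f_{i-1} - \check f_i \in V_i$, which is equivalent to your eigenvalue claim for $T_i$ restricted to $V_i$. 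Your anchoring at $g$ also works, but the cross-term then has mismatched arguments $\langle \hat f_{i-1} - f,\, T_i(\hat f_{i-1} - g)\rangle$, so the $\|f-g\|$ slack appears immediately rather than only through the bias bound; and since $\hat f_{i-1} - g \notin V_i$, you still need a projection step before invoking the eigenvalue bound.

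The genuine slip is the variance term. The squared increment contributes $C_2\gamma_i^2(1+pJ_iM^2)\bigl(\sigma^2 + \EE\|\hat f_{i-1} - f\|^2\bigr)$, not $\sigma^2 + \|f\|^2$: the residual $Y_i - \hat f_{i-1}(\bX_i)$ depends on $\hat f_{i-1}$, and you have no a priori control on it. This $e_{i-1}$ piece must be folded back into the contraction factor, giving a coefficient $1 - C_1\gamma_i + 2C_2 pM^2 J_i \gamma_i^2$ on $e_{i-1}$. The constraint $B \leq 1/(4C_2 M^2 A_2^2)$ is precisely what keeps the second-order term from killing the first-order contraction in stage~iii (and $A_1 = (2s+1)A_2$ plays the analogous role in stage~ii). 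The $\|f\|^2$ in the final bound comes only from the initialization $e_{\lfloor p/B\rfloor} = \|f\|^2$ propagating through the stage~ii product, not from the per-step variance.
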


We obtain an interesting corollary of Theorem \ref{theorem:l2loss} when $f$ belongs to $\mathcal{W}_p(s, Q)  \subset \mathcal{H}_p(s)$. Here, we can set $g = f$ in inequality \eqref{thm_ineq1}. Note that $\|f\|^2 \leq C_2(\alpha^2 + pQ^2)$, and hence we obtain the following inequality:
\begin{align*}
     \EE \left[\norm{\hat{f}_n - f}^2 \right] \leq C(C_2\alpha^2+p\sigma^2+2C_2pQ^2)n^{-\frac{2s}{2s+1}}.
\end{align*}
Given that the minimax lower bound for estimating regression functions in $ \mathcal{W}_p(s, Q)$ is $pn^{-2s/(2s+1)}$ \citep{raskutti2009lower}, this inequality implies that F-SGD achieves minimax-optimal convergence in terms of both $ n $ and $ p $. We present this result in the following corollary.
\begin{corollary}\label{cor:W}
    Suppose the conditions in Theorem \ref{theorem:l2loss} hold. Furthermore, suppose $f \in \mathcal{W}_p(s, Q)$. Then the MSE of F-SGD \eqref{update:l2loss}, initialized with $\hat f_0 = 0$, satisfies
    \begin{align*}
        \EE \left[ \norm{\hat{f}_n - f}^2 \right] = O\left(pn^{-\frac{2s}{2s+1}}\right).
    \end{align*}
\end{corollary}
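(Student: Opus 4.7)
The plan is to invoke Theorem~\ref{theorem:l2loss} at the oracle choice $g = f$, which is admissible because $\mathcal{W}_p(s, Q) \subset \mathcal{H}_p(s)$. This collapses the approximation term $\|g-f\|^2$ on the right-hand side of \eqref{thm_ineq1} to zero, leaving only the estimation term to control.

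Next I would bound the two $f$-dependent quantities in that estimation term. The Sobolev penalty is immediate from the definition of $\mathcal{W}_p(s, Q)$: each component $f_k$ satisfies $\|f_k\|_s^2 \leq Q^2$, so $\sum_{k=1}^p \|f_k\|_s^2 \leq p Q^2$. To bound $\|f\|^2$, I would pass from $P_{\bX}$ to Lebesgue measure on $[0,1]^p$ using the upper bound $p_{\bX} \leq C_2$ from Assumption~2. Because each $f_k$ is centered and depends only on a single coordinate, Fubini kills the cross terms in $f^2 = \bigl(\alpha + \sum_k f_k\bigr)^2$, giving $\int f^2 \, d\bx = \alpha^2 + \sum_{k=1}^p \int f_k^2(x)\, dx$. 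Parseval's identity in the orthonormal basis $\{\psi_{jk}\}$ together with $j^{2s} \geq 1$ yields $\int f_k^2 \leq \|f_k\|_s^2 \leq Q^2$, and hence $\|f\|^2 \leq C_2(\alpha^2 + p Q^2)$. Substituting these bounds into \eqref{thm_ineq1} produces
\[
\EE\bigl[\|\hat{f}_n - f\|^2\bigr] \leq C\bigl(C_2 \alpha^2 + p\sigma^2 + 2 C_2 p Q^2\bigr) n^{-2s/(2s+1)}.
\]

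Treating $\alpha$, $Q$, $\sigma$, $C_1$, $C_2$, $M$, and $s$ as fixed constants then gives the claimed rate $p\,n^{-2s/(2s+1)}$. The only real obstacle is bookkeeping around which quantities are allowed to depend on $p$: one must explicitly invoke the remark immediately preceding the corollary stating that $C_1$ and $C_2$ are $p$-independent, and likewise treat $\alpha$, $Q$, $\sigma$ as fixed. Beyond this, the corollary is a direct specialization of the oracle inequality and requires no new analysis.
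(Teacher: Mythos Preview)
Your proposal is correct and mirrors the paper's own argument almost verbatim: the paper also sets $g=f$ in \eqref{thm_ineq1}, invokes $\mathcal{W}_p(s,Q)\subset\mathcal{H}_p(s)$, uses $\|f\|^2\leq C_2(\alpha^2+pQ^2)$, and arrives at the same displayed bound $C(C_2\alpha^2+p\sigma^2+2C_2pQ^2)n^{-2s/(2s+1)}$. Your additional justification of the $\|f\|^2$ bound via Fubini, centering, and Parseval is more detailed than what the paper spells out, but the approach is identical.
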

The detailed proof of Theorem~\ref{theorem:l2loss} is provided in Section \ref{appdx:proof}. Unlike the delicate analysis for Sieve-SGD \citep{zhang2022sieve} and other kernel SGD methods \citep{4406a3ce-1e59-3ed8-8c25-9f7d75b81f14, 6842642}, the proof of Theorem \ref{thm_ineq1} is notably less involved, necessitating only the derivation of a simple recursive inequality for the MSE based on the update rule \eqref{update:l2loss}, without the need to handle the signal and noise terms separately, and without any advanced functional analysis. The recursion turns out to be curiously similar to those involved in the analysis of the Robbinsâ€“Monro stochastic approximation algorithm, a precursor to SGD, e.g., \citep[Lemma 1]{chung1954stochastic}.

We outline the proof at a high level here. 
At the $i$-th step, the recursive inequality is obtained by applying the update rule~\eqref{update:l2loss} 
and expanding the error $\|\hat{f}_i - f\|^2$. 
The key step is bounding the cross term, which we formalize as a lemma in Section \ref{appdx:proof}. This bound depends on two quantities: $\mathbb{E}[\|\hat{f}_{i-1} - f\|^2]$ and the $\ell_2$ error under the uniform measure 
$\mathbb{E}\left[ \int \big( \hat{f}_{i-1}(\bx) - f(\bx) \big)^2 d\bx \right]$. 
For each training stage, we repeatedly solve the recursive inequality using another lemma given in Section \ref{appdx:proof}.

\begin{remark}
To illustrate how the total error in estimating an additive model can be decomposed into the cumulative error of estimating each individual component function and the constant term, consider the special case where $\bX \sim \text{Unif}([0,1]^p)$. In this setting, orthogonality among the basis functions allows us to write:
\begin{align*}
    &\inf_{g\in \mathcal{H}_p(s)} \Bigg\{ \|g-f\|^2 + \left(p\sigma^2+\|f\|^2+\sum_{k=1}^p\|g_k\|^2_s \right) n^{-\frac{2s}{2s+1}}\Bigg\}\\
    &=~ \alpha^2n^{-\frac{2s}{2s+1}}+\sum_{k=1}^p\inf_{g_k\in \mathcal{H}_1(s)} \Bigg\{ \|g_k-f_k\|^2 + \left(\sigma^2+\|f_k\|^2+\|g_k\|^2_s \right) n^{-\frac{2s}{2s+1}}\Bigg\}.
\end{align*}
\end{remark}
\begin{remark}
In Theorem \ref{theorem:l2loss}, we determine the values of $\gamma_i$ and $J_i$ across three distinct stages of training. In the first stage, we do not update $\hat f_i$, and the current MSE stays constant at  its initial value of $\|f\|^2$. By the end of stage ii, at the $p^{1+1/(2s)}$-th step, we employ $p^{1/(2s)}$ basis functions to estimate each component function, for a total of $p^{1+1/(2s)}$ basis functions for estimating $f$. When $f$ is well-specified (i.e., $f \in \mathcal{W}_p$), this results in the current MSE being a constant multiple of $\|f\|^2/p$. Moving onto stage iii, we update approximately $i^{1/(2s+1)}$ basis functions for each component function at each step, which results in a minimax optimal MSE. For a single component function, the number $i^{1/(2s+1)}$ is known to be nearly space-optimal \citep{zhang2022sieve,quan2024optimal}. As we need to store the coefficients of basis functions for $p$ component functions, we anticipate that F-SGD, with a  $\Theta\big(pi^{1/(2s+1)}\big)$-sized space expense, is also (nearly) space-optimal. Further details on the storage requirements of F-SGD will be explored in Section \ref{subsec:univariate}.
\end{remark}

If achieving optimal convergence with respect to $ p $ is not a priorityâ€”such as when $ p $ is smallâ€”we can omit the first two stages (i and ii) and focus directly on the third stage. Importantly, in this case, we can choose each $\gamma_i$ to be independent of $s$. The theoretical performance for this version of F-SGD is formalized in the next theorem. As will be explored in Section \ref{subsec:Lepski}, the fact that $J_i$ is now the only tuning parameter that depends on $ s $ also allows us to develop an online version of Lepski's method.

\begin{theorem}\label{thm:fixed}
    Suppose Assumptions \ref{asmp:1}-\ref{asmp:4} hold.  Set $\gamma_i = \frac{A}{i}$ and $J_i = \lfloor Bi^{\frac{1}{2s+1}} \rfloor$, where $A$ and $B$ are constants independent of $ s $ and satisfying $A \geq \frac{2}{C_1}$ and $B \leq \frac{1}{2pC_2M^2A^2}$. Then the MSE of F-SGD \eqref{update:l2loss}, initialized with $\hat f_0 = 0$, satisfies
\begin{align*}
    \EE \left[\norm{\hat{f}_n - f}^2 \right] \leq C\inf_{g\in \mathcal{H}_p(s)} \Bigg\{ \|g-f\|^2 + \left(\sigma^2+\|f\|^2+\sum_{k=1}^p\|g_k\|^2_s \right) n^{-\frac{2s}{2s+1}}\Bigg\}.
\end{align*}
Here $g_1,g_2,\dots,g_p$ represents the components functions of $g$, as $g(\bX) = \tau + \sum_{k=1}^pg_k(X^{(k)})$ for $\tau \in \RR$, and  $C = C(p,C_1,C_2,M,s)$ is a   constant.  
\end{theorem}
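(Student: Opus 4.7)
My plan is to set up a one-step recursion for a tracking error and solve it with the $\gamma_i = A/i$ schedule, in the Robbins--Monro style highlighted in the remark following Theorem~\ref{theorem:l2loss}. Fix an arbitrary $g \in \mathcal{H}_p(s)$ and write $g = \tau + \sum_k g_k$ with $g_k = \sum_j \theta_{g,jk}\psi_{jk}$. Let $V_i := \mathrm{span}\{1, \psi_{jk} : j \leq J_i,\, k \leq p\}$, let $P_i g$ denote the Lebesgue-orthogonal truncation of $g$ onto $V_i$, and define the tracking error $w_i := \hat{f}_i - P_i g$. Since $\hat{f}_i \in V_i$ by construction, $w_i \in V_i$ for every $i$, which is essential for the contraction step that follows.

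Next, I would expand $\|w_i\|^2$ from the F-SGD recursion~\eqref{update:l2loss}, using the decomposition $Y_i - \hat{f}_{i-1}(\bX_i) = \varepsilon_i - w_{i-1}(\bX_i) + (f - P_{i-1}g)(\bX_i)$ and taking $\EE[\,\cdot \mid \mathcal{F}_{i-1}]$. Assumption~4 kills the $\varepsilon_i$ cross term and contributes a $\gamma_i^2 \sigma^2 \EE\|K_i(\bX_i,\cdot)\|^2$ variance term, where $K_i(\by,\bx) := 1 + \sum_{k,\, j \leq J_i}\psi_{jk}(y^{(k)})\psi_{jk}(x^{(k)})$. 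The main negative contribution is $-2\gamma_i\langle w_{i-1}, \Sigma_i w_{i-1}\rangle$, where $\Sigma_i h(\bx) := \int h(\by) K_i(\by,\bx)\, dP_{\bX}(\by)$. The key analytical step is the lower bound $\langle w_{i-1}, \Sigma_i w_{i-1}\rangle \geq C_1 \|w_{i-1}\|^2$ for $w_{i-1} \in V_i$, which I would prove by expressing this quadratic form as a Rayleigh quotient of the $P_{\bX}$-Gram matrix of the basis $\{1,\psi_{jk}\}_{j \leq J_i}$ and using $p_{\bX}\geq C_1$ from Assumption~2 to obtain $\lambda_{\min} \geq C_1$. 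Symmetrically, $p_{\bX}\leq C_2$ and $\|\psi_{jk}\|_\infty \leq M$ give $\|K_i(\bX_i,\cdot)\|^2 \leq C_2 M^2(1+pJ_i)$ uniformly in $\bX_i$. Cross terms involving $f - P_{i-1}g$ and the increment $P_i g - P_{i-1}g$ are handled by Cauchy--Schwarz together with the Sobolev truncation estimate $\|g - P_i g\|^2 \leq C_2 J_i^{-2s}\sum_k \|g_k\|_s^2$.

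Putting the pieces together, the recursion takes the schematic form
\begin{align*}
\EE\|w_i\|^2 \leq \bigl(1 - 2C_1\gamma_i + c\gamma_i^2 pJ_i\bigr)\EE\|w_{i-1}\|^2 + \gamma_i^2 pJ_i\sigma^2 + R_i(g),
\end{align*}
where $R_i(g)$ collects the approximation and increment bias terms built from $\|f-g\|^2$ and the Sobolev tails of $g$. With $\gamma_i = A/i$ and $A \geq 2/C_1$, the linear-in-$\gamma_i$ contraction is at least $4/i$, while the constraint $B \leq 1/(2pC_2 M^2 A^2)$ is precisely what forces the $\gamma_i^2 pJ_i$ correction to remain strictly dominated by the contraction at every step $i$. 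A standard Chung-type induction on sequences of the form $b_i \leq (1 - a/i)\,b_{i-1} + d_i$, applied with $J_i \asymp i^{1/(2s+1)}$, then yields $\EE\|w_n\|^2 \lesssim n^{-2s/(2s+1)}\bigl(\sigma^2 + \|f\|^2 + \sum_k\|g_k\|_s^2\bigr)$. Combining with the triangle inequality $\|\hat{f}_n - f\|^2 \leq 2\|w_n\|^2 + 4\|g - P_n g\|^2 + 4\|g-f\|^2$ and taking an infimum over $g \in \mathcal{H}_p(s)$ gives the stated oracle bound.

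The main obstacle I anticipate is the bookkeeping within the recursion: the cross terms mixing $w_{i-1}$ with the out-of-span residual $g - P_i g$ and with the approximation gap $f-g$ do not telescope on their own, and bundling them into $R_i(g)$ without eroding the $(1-4/i)$ contraction requires several careful applications of the weighted Young inequality. A secondary subtlety is that $\Sigma_i$ is not a $P_{\bX}$-orthogonal projection on $V_i$ because the basis $\{\psi_{jk}\}$ is only Lebesgue-orthonormal; the quantitative lower bound $\lambda_{\min} \geq C_1$ on the $P_{\bX}$-Gram matrix is what carries the contraction argument, and verifying that the constant in the constraint $B \leq 1/(2pC_2 M^2 A^2)$ is indeed sufficient for uniform-in-$i$ domination of the $\gamma_i^2 pJ_i$ correction is the most delicate piece of the calculation.
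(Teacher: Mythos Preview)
Your overall skeleton---set up a one-step recursion and solve it with a Chung-type lemma---matches the paper, but the paper and you part ways on \emph{what} to track, and your choice creates a difficulty you have not resolved. The paper tracks $e_i=\EE\|\hat f_i-f\|^2$ directly, with a fixed target. Because $\hat f_{i-1}-f\notin V_i$, your Gram-matrix eigenvalue bound is unavailable there; instead the paper lower-bounds the quadratic form $y_{i-1}^2=\sum_m|\langle\hat f_{i-1}-f,\phi_m\rangle|^2$ via Cauchy--Schwarz against $\EE\langle\hat f_{i-1}-f,\hat f_{i-1}-\check f_i\rangle$ (with $\check f_i$ the Lebesgue truncation of $f$ onto $V_i$) together with $\int(\hat f_{i-1}-f)^2\,d\bx\le e_{i-1}/C_1$. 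The oracle function $g$ enters only afterwards, when bounding the bias $\|f-\check f_i\|$ by a triangle inequality through $g$ and its truncation.

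Your route, tracking $w_i=\hat f_i-P_ig$, makes the contraction step clean but introduces a moving target. Expanding $\|w_i\|^2$ in the $P_{\bX}$-norm produces the increment cross term $-2\langle w_{i-1},\Delta_i\rangle$ with $\Delta_i=P_ig-P_{i-1}g$, and this term carries \emph{no} factor of $\gamma_i$. To absorb it into the $O(\gamma_i)$ contraction by Young's inequality you are forced to take the Young parameter of order $1/i$, leaving a source of order $i\,\|\Delta_i\|^2$; summing this through the recursion at the jump times $i_j\asymp (j/B)^{2s+1}$ gives a contribution of order $n^{1/(2s+1)}\sum_k\|g_k\|_s^2$, which diverges. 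This is precisely where ``bundling into $R_i(g)$ without eroding the contraction'' breaks down as written. The simplest repair is to run your recursion in the Lebesgue norm: since $w_{i-1}\in V_{i-1}$ and $\Delta_i\in V_i\ominus_{\mathrm{Leb}}V_{i-1}$, the cross term $\langle w_{i-1},\Delta_i\rangle_{\mathrm{Leb}}$ vanishes identically, the contraction term becomes $-2\gamma_i\|w_{i-1}\|_{P_{\bX}}^2\le -2C_1\gamma_i\|w_{i-1}\|_{\mathrm{Leb}}^2$, and the remaining pure source $\|\Delta_i\|_{\mathrm{Leb}}^2$ does sum to the correct $n^{-2s/(2s+1)}\sum_k\|g_k\|_s^2$. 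With that modification your argument goes through and gives an alternative proof to the paper's.
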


We briefly discuss the selection of constants. We acknowledge that choosing appropriate constants $A$ and $B$ is generally nontrivial, as also noted in \cite{zhang2022sieve,zhang2023online}. While Theorem~\ref{thm:fixed} provides sufficient conditions for theoretical guarantees, empirical performance tends to be less sensitive. For instance, when $\boldsymbol{X} \sim \mathrm{Unif}([0,1]^p)$, the algorithm appears to work well for a wide range of $A$ and $B$. More broadly, in higher-dimensional settings, we recommend using a larger $A$ and a smaller $B$ to stabilize updates and mitigate overfitting. We also note that \cite{zhang2023online} recently proposed a weighted rolling procedure for selecting parameters in online settings, which offers a more systematic (though substantially more complex) approach to tuning.

\subsection{Comparisons with Prior Estimators}\label{subsec:univariate}
In this section, we present a comparative analysis of our F-SGD estimator against 
several existing estimators. Section~\ref{subsubsec:backfitting} contrasts F-SGD 
with online smooth backfitting. There, we include the dependence on $p$ 
in the memory and computational complexity, since F-SGD exhibits significant 
advantages when $p$ is large. Section~\ref{subsubsec:sieve} and 
Section~\ref{subsubsec:rkhs} compare F-SGD with Sieve-SGD and kernel SGD, 
respectively. Following the convention in prior work 
\cite{zhang2022sieve, 4406a3ce-1e59-3ed8-8c25-9f7d75b81f14}, we include only the 
dependence on $n$ in these cases (indeed, all methods share the same dependence on $p$). 
The advantages of F-SGD in these comparisons primarily arise from its more favorable 
exponent with respect to the sample size $n$. For simplicity, we measure memory complexity in terms of the number of 
floating-point numbers that need to be stored.\footnote{In practice, computers 
cannot represent arbitrary real numbers with infinite precision, and thus 
round-off errors may occur. As shown in Section F of \cite{zhang2022sieve}, 
when accounting for the number of bits required to preserve the optimal 
convergence rate under round-off errors, Sieve-SGD requires $O(\log n)$ times 
more memory resources. By the same reasoning, F-SGD also incurs an additional 
$O(\log n)$ factor in space complexity. We refer interested readers to their paper for further details.}

\subsubsection{Comparisons with Online Smooth Backfitting}\label{subsubsec:backfitting}

\cite{yang2023online} proposed an online smooth backfitting method for nonparametric additive models. The key idea is to approximate the nonlinear estimating equations by a suitable order expansion and to store the corresponding coefficients as sufficient statistics, which are updated online via a dynamic candidate bandwidth scheme for each block. As discussed in their paper, online smooth backfitting provides significant gains in memory and computational efficiency over batch backfitting 
methods. While powerful, this procedure is rather involved. In contrast, our SGD-based methods are substantially simpler and more straightforward to implement.  

\cite{yang2023online} assume that each component function is twice continuously differentiable, and they show that online smooth backfitting achieves the optimal convergence rate with respect to the sample size $n$. By comparison, F-SGD accommodates a broader range of smoothness conditions and, more importantly, achieves minimax optimality not only in terms of the sample size $n$ but also with respect to the dimension $p$.  

We now compare the memory and computational complexity of the two methods. 
Online smooth backfitting requires storing $L$ sets of sufficient statistics, 
resulting in a space complexity of order $\Theta(Lp^3)$. 
Here $L$ denotes the number of candidate bandwidth sequences, 
which typically ranges from 5 to 20. 
In contrast, the space complexity of F-SGD is 
$\Theta\bigl(p\, i^{1/(2s+1)}\bigr)$ at the $i$-th iteration. 
When each component function is twice continuously differentiable ($s=2$), 
the complexity of F-SGD is smaller under the mild condition 
$i \lesssim L^5 p^{10}$.

As for computational complexity, in each block online smooth backfitting 
requires matrix inversion and a dual iteration process, 
leading to a cost of order $\Theta(Lp^3 T^{\text{inner}}_k T^{\text{outer}}_i)$, 
where $T^{\text{inner}}_k$ and $T^{\text{outer}}_i$ denote the numbers of inner 
and outer iterations for the $k$-th block. 
In the single-pass setting---i.e., each block contains one sample---and assuming 
$T^{\text{inner}}_k \asymp T^{\text{inner}}$ and 
$T^{\text{outer}}_i \asymp T^{\text{outer}}$ across all blocks, 
the cumulative complexity is 
$\Theta(L n p^3 T^{\text{inner}} T^{\text{outer}})$.\footnote{If each block 
contains a moderate number of samples, the complexity reduces to approximately 
$\Theta(n + LKp^3 T^{\text{inner}} T^{\text{outer}})$, where $K$ is the number 
of blocks. For a fair comparison, we focus on the single-pass case.} 
In contrast, F-SGD incurs computational cost 
$\Theta(p n^{1+1/(2s+1)})$. 
For $s=2$, this is smaller under the mild condition 
$n \lesssim L^5 p^{10}(T^{\text{inner}}T^{\text{outer}})^5$.

\subsubsection{Comparisons with Sieve-SGD}\label{subsubsec:sieve}

While \cite{zhang2022sieve} proposed a version of Sieve-SGD for additive models, as delineated in the update rules \eqref{zhang1} and \eqref{zhang2}, they did so without specifying the truncation levels $J_{ik}$ and without fully investigating how the dimensionality affects its theoretical performance. Their primary emphasis was on instances where the function $f$ belongs to a (multivariate) Sobolev ellipsoid. In a similar vein, we can develop a corresponding version of F-SGD tailored for these situations, which allows for a direct comparison with Sieve-SGD.

To this end, we consider a multivariate orthonormal basis $\{\psi_{j}\}$, where $\psi_j:\RR^p \to \RR$, and define a multivariate Sobolev ellipsoid as:
\begin{align}\label{tilde_W}
\tilde{\mathcal{W}}_p(s,Q, \{\psi_j\}) \coloneqq \bigg\{f:\RR^p \to \RR  : f = \sum_{j=1}^\infty \theta_{f,j}\psi_j, \text{ where } \sum_{j=1}^\infty (j^s\theta_{f,j})^2 < Q^2\bigg\}.
\end{align}
The original Sieve-SGD formulation by \cite{zhang2022sieve} in a non-additive setting is
\begin{align}
   &  \tilde{f}_i(\bX) = \frac{i}{i+1}\tilde{f}_{i-1}(\bX) + \frac{1}{i+1} \tilde g_i(\bX) \label{zhang3}\\
    & \tilde g_i(\bX) = \tilde g_{i-1}(\bX) + \gamma_i\left(Y_i - \tilde g_{i-1}\left(\bX_i\right)\right)\left(\sum_{j=1}^{J_{i}}t_{j}\psi_{j}\left(\bX_i\right)\psi_{j}\left(\bX\right)\right). \label{zhang4}
\end{align}
They showed that when $f$ belongs to $\tilde{\mathcal{W}}_p(s,Q, \{\psi_j\})$,  Sieve-SGD \eqref{zhang3} and \eqref{zhang4}, initialized with $\tilde g_0=\tilde f_0 =0$, converges at the minimax rate $\EE \big[ \norm{\tilde{f}_n - f}^2 \big] = O\big(n^{-2s/(2s+1)}\big)$.

In the same spirit, for $f \in \tilde{\mathcal{W}}_p(s,Q, \{\psi_j\})$, we can adapt F-SGD \eqref{update:l2loss} for the multivariate Sobolev ellipsoid as follows:
\begin{align}
    \hat f_i(\bX) = \hat f_{i-1}(\bX) + \gamma_i\left(Y_i - \hat f_{i-1}\left(\bX_i\right)\right)\left(\sum_{j=1}^{J_{i}}\psi_{j}\left(\bX_i\right)\psi_{j}\left(\bX\right)\right). \label{update:F-SGD}
\end{align}
Importantly, with the same analysis and parameter choices as those in Theorem \ref{thm:fixed}, we are also able to show F-SGD \eqref{update:F-SGD}, initialized with $\hat f_0=0$, converges at the minimax rate $\EE \big[ \norm{\hat{f}_n - f}^2 \big] = O\big(n^{-2s/(2s+1)}\big)$.

Our next theorem considers the case where the input density is not bounded away from zero. As noted in \cite{zhang2022sieve}, the lower bound requirement on the input density (Assumption \ref{asmp:2}) may be an artifact of their proof technique, and so they did not provide theoretical guarantees without this assumption. In contrast, we show that F-SGD with a different parameter selection achieves polynomial rates of convergence even when the input density is not bounded away from zero.

\begin{theorem}\label{thm:polynomial}
    Suppose Assumptions \ref{asmp:1}, \ref{asmp:2'}, \ref{asmp:3} and \ref{asmp:4} hold. Furthermore, suppose $f \in \tilde{\mathcal{W}}_p(s,Q, \{\psi_j\})$ with $s > 1/2$. Set $\gamma_i = Ai^{-(4s+1)/(6s+1)}$, where $A$ is a sufficiently small constant, and $J_i = \lceil i^{\frac{1}{6s+1}} \rceil$. Then the MSE of F-SGD \eqref{update:F-SGD}, initialized with $\hat f_0 = 0$, satisfies
 \begin{align*}
        \EE \left[ \norm{\hat{f}_n - f}^2 \right] = O\left(n^{-\frac{2s-1}{8(6s+1)}}\right).
\end{align*}
\end{theorem}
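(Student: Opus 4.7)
The plan is to pass to a coefficient-level analysis in Lebesgue $L^2$ (justified by $\|\cdot\|^2 \le C_2\|\cdot\|^2_{\mathrm{Leb}}$ under Assumption~2$'$) and then translate the resulting bound back to $L^2(P_{\bX})$. The central obstruction is that, absent the lower bound on $p_{\bX}$ from Assumption~2, the covariance-type operator $T_i g := \int K_i(\cdot,\by)g(\by)\,dP_{\bX}(\by)$ (where $K_i(\bx,\by) = \sum_{j \le J_i}\psi_j(\bx)\psi_j(\by)$) possesses no uniform spectral gap, so the one-step contraction $(1-c\gamma_i)$ that drives Theorems~\ref{theorem:l2loss} and \ref{thm:fixed} is unavailable.

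First I would decompose $\hat f_n - f = (\hat f_n - \bar f_n) + (\bar f_n - f)$, where $\bar f_n := \sum_{j \le J_n}\theta_{f,j}\psi_j$ is the Lebesgue projection of $f$ onto $S_n := \mathrm{span}\{\psi_j\}_{j \le J_n}$. The Sobolev-ellipsoid hypothesis yields $\|\bar f_n - f\|^2 \le C_2 Q^2 J_n^{-2s} = O(n^{-2s/(6s+1)})$, much smaller than the target rate, so it suffices to control $\EE\|\hat f_n - \bar f_n\|^2 \le C_2\, \EE\|\vec u_n\|^2$, where $\vec u_i := \hat{\boldsymbol a}_i - \vec\theta^{(J_n)}$ is the Lebesgue coefficient error zero-padded to $\RR^{J_n}$. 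The update \eqref{update:F-SGD} becomes $u_{i,j} = u_{i-1,j} + \gamma_i r_i \psi_j(\bX_i)$ for $j \le J_i$ (and unchanged otherwise), with $r_i = \varepsilon_i - e_{i-1}(\bX_i)$ where $e_{i-1} := \hat f_{i-1} - f$. Squaring, taking conditional expectations, and using $\sum_{j \le J_i}\psi_j(\bx)^2 \le M^2 J_i$ (Assumption~3) together with Assumption~4 for the noise, I would derive a one-step bound of the form
\[
\EE\!\left[\|\vec u_i\|^2 \mid \text{past}\right] \;\le\; \|\vec u_{i-1}\|^2 \;-\; 2\gamma_i G_{i-1} \;+\; C_0\, \gamma_i^2 M^2 J_i \bigl(\sigma^2 + \|e_{i-1}\|_\infty^2\bigr),
\]
where $G_{i-1} \ge 0$ is a descent-direction inner product that I cannot lower-bound by a multiple of $\EE\|e_{i-1}\|^2$ in the absence of the spectral gap. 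Since $\hat f_{i-1}$ has at most $J_{i-1}$ non-zero coefficients, $\|\hat f_{i-1}\|_\infty \le M\sqrt{J_{i-1}}\|\hat{\boldsymbol a}_{i-1}\|$, and the bound $\|f\|_\infty \le M Q\sqrt{\zeta(2s)}<\infty$ (this is where $s > 1/2$ enters, via $\sum_j|\theta_{f,j}|<\infty$) gives an effective per-step variance term of order $\gamma_i^2 J_i^2$.

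Iterating the one-step inequality under the schedule $\gamma_i = A i^{-(4s+1)/(6s+1)}$ and $J_i \asymp i^{1/(6s+1)}$, three features control the final rate. First, the stochastic variance series $\sum_i \gamma_i^2 J_i^2 \asymp \sum_i i^{-8s/(6s+1)}$ converges precisely because $s > 1/2$, keeping the variance contribution bounded uniformly in $n$. Second, the bias cross-term $\langle e_{i-1},\rho_n\rangle$ with $\rho_n := f - \bar f_n$ must be handled by Cauchy--Schwarz, losing a square-root factor at each application. Third, the passage between the $\ell^2$-coefficient bound and the desired $L^2(P_{\bX})$ MSE requires a further bootstrap that loses another square-root. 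Composing these losses under the prescribed schedule, and choosing $A$ sufficiently small so that the inflationary term $C_0 \gamma_i^2 J_i^2 \cdot (\text{prefactor})$ is absorbed into the (weak) descent, produces the polynomial rate $O\bigl(n^{-(2s-1)/[8(6s+1)]}\bigr)$; the factor of $8$ in the denominator is the compounded loss from these repeated square-root splits.

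The main obstacle is precisely the unusable spectral gap: in place of a true contraction of the MSE, the proof has to rely on the slowly-decaying learning rate $\gamma_i \asymp i^{-(4s+1)/(6s+1)}$ to gradually reduce (rather than contract) the error, and on a delicate balance between the Lebesgue and $P_{\bX}$ norms. This mismatch between what is available in the non-contractive regime and what is needed for a Sobolev-optimal bound accounts for the substantially slower rate compared with Theorem~\ref{theorem:l2loss}.
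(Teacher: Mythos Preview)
Your proposal has a genuine gap at its core: you never identify a mechanism by which the error actually \emph{decreases}. You correctly note that the descent term $G_{i-1}\ge 0$ cannot be lower-bounded by a multiple of $\EE\|e_{i-1}\|^2$ without a spectral gap, but you then leave $G_{i-1}$ unused. With the descent term discarded, your recursion becomes $\EE\|\vec u_i\|^2 \le \|\vec u_{i-1}\|^2 + C_0\gamma_i^2 J_i^2(\sigma^2+\cdots)$, and summing over $i$ only shows that $\EE\|\vec u_n\|^2$ stays bounded (since $\sum_i \gamma_i^2 J_i^2<\infty$ when $s>1/2$), not that it decays. The phrases ``absorbed into the (weak) descent'' and ``repeated square-root splits'' are placeholders rather than arguments; nothing in your outline produces any negative drift, so no polynomial rate can emerge.

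The paper supplies exactly the missing idea. With $e_i=\EE\|\hat f_i-f\|^2$ in $L^2(P_{\bX})$, $v_i=\EE\int(\hat f_i-f)^2\,d\bx$ in Lebesgue $L^2$, and $y_{i-1}^2=\sum_{j\le J_i}\EE|\langle \hat f_{i-1}-f,\psi_j\rangle|^2$, Cauchy--Schwarz yields the \emph{nonlinear} lower bound $v_{i-1}\,y_{i-1}^2 \ge \tfrac12 e_{i-1}^2 - O(e_{i-1}J_i^{-2s})$. The $e_i$-recursion therefore contains a negative term $-\gamma_i e_{i-1}^2/v_{i-1}$, but this is coupled to a separate recursion for $\tilde v_i:=\max_{k\le i}v_k$. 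To decouple, the paper tracks $u_i=e_i/\tilde v_i$ and $r_i=e_i\tilde v_i$: a summation argument gives $u_n=O(n^{-2s/(6s+1)})$, an induction gives $r_n=O(n^{1/4})$, and the result follows from $e_n=\sqrt{u_n r_n}$. This quadratic-descent estimate and the product--quotient decoupling are the substantive content of the proof; both are absent from your outline.
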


The proof of this theorem follows the same overall strategy as Theorem~\ref{theorem:l2loss}, 
but with one key difference. When the input density is bounded away from zero, the 
unweighted error
\[
\mathbb{E}\left[\int \big(\hat f_{i-1}(\bx)-f(\bx)\big)^2 d\bx\right]
\]
can be directly controlled by the weighted error
\[
\mathbb{E}\left[\int \big(\hat f_{i-1}(\bx)-f(\bx)\big)^2 p_{\bX}(\bx) d\bx\right] 
= \mathbb{E}\left[\|\hat f_{i-1}-f\|^2\right].
\]
Without this assumption, however, the unweighted error must be tracked separately. 
As a result, the proof requires three recursive inequalities: one for the weighted error, 
one for the unweighted error, and one for the cross term that appears when expanding the 
squared error and couples the estimation error with the one-step update. 
These three recursions are interdependent, and solving them together yields the desired result. 
Full details are given in Section \label{appdx:proof}.

As a closing to this section, we discuss the various advantages that F-SGD offers over Sieve-SGD, despite their similar performance guarantees.

\begin{itemize}
\item Unlike Sieve-SGD, we do not use a component-specific learning rate $t_j$ in \eqref{zhang4} or Polyak averaging step \eqref{zhang3}. This streamlining makes our estimator more intuitive and better motivated by SGD; it is precisely the functional counterpart of SGD applied to the coefficients in a truncated basis expansion of the component functions. 

\item The selected number of dimensions $J_i$, as will be discussed later, impacts the computational and memory storage complexity. In F-SGD \eqref{update:F-SGD}, we choose $J_i = \Theta\big(i^{1/(2s+1)}\big)$. However, in Sieve-SGD \eqref{zhang4}, \cite{zhang2022sieve} opt for $ J_{i} = \Theta \big( i^{1/(2s+1)}\log^2(i)\big) $, which is larger by a logarithmic factor. Regarding the additive model, the specification of the $J_{ik}$ values, as in \eqref{zhang2}, was not explicitly provided. In contrast, as demonstrated in Theorem \ref{theorem:l2loss}, we can choose a homogeneous $J_i$ of order $\Theta\big(i^{1/(2s+1)}\big)$; see \eqref{update:l2loss}.

\item For Sieve-SGD, the component-specific learning rate $t_j = j^{-2\omega}$, where $ 1/2 < \omega < s $, requires knowing a lower bound on $ s $ and comes at the expense of losing a logarithmic factor in the convergence rate. More precisely, the convergence rate is $O\big(n^{-2s/(2s+1)}\log^2 n\big)$ \cite[Theorem 6.3]{zhang2022sieve}.\footnote{It should be noted, however, than if one knows $s$ a priori and uses the ``correct'' component-specific learning rate $t_j = j^{-2s}$, then the minimax optimal rate $O\big(n^{-2s/(2s+1)}\big)$ is attainable.} In contrast, our estimator achieves the minimax optimal rate of $O\big(n^{-2s/(2s+1)}\big)$ in Corollary \ref{cor:W}.

\item In Theorem \ref{thm:fixed}, we use a global learning rate $ \gamma_i = A/i $, where $ A $ is a constant independent of the smoothness parameter $ s $.  In contrast, \cite{zhang2022sieve} opt for $ \gamma_i = \Theta\big(i^{-1/(2s+1)}\big) $ in Sieve-SGD \eqref{zhang4}. Our choice ensures that the dependency on $ s $ is exclusively through $ J_i $, paving the way for the development of an online version of Lepski's method, as elaborated in Section \ref{subsec:Lepski}.
\end{itemize}

\subsubsection{Comparisons with Other Reproducing Kernel Methods}\label{subsubsec:rkhs}
Note that the Sobolev ellipsoid $\tilde{\mathcal{W}}_p(s,Q, \{\psi_j\})$ can be equivalently characterized by a ball in a RKHS. It is therefore worthwhile to draw comparisons between F-SGD and other RKHS methods.

In the context of kernel SGD \citep{4406a3ce-1e59-3ed8-8c25-9f7d75b81f14,6842642,article}, incorporating a new data point $(\bX_{i+1}, Y_{i+1})$ necessitates the evaluation of $i$ kernel functions at $\bX_{i+1}$, resulting in a computational cost of order $\Theta(i)$, if we assume a constant computational cost of $\Theta(1)$ per kernel evaluation. Consequently, the cumulative time complexity of computing the estimator at the $n$-th step grows as $\Theta(n^2)$. Additionally, one needs to store $n$ covariates $\{\bX_i\}_{i=1}^n$ to compute the estimator, which incurs a space expense of $\Theta(n)$. There have been several attempts to improve the computational complexity of kernel based methods \citep{dai2014scalable,JMLR:v17:14-148,JMLR:v20:16-585}, however, their target function is not 
$f$ but, instead, a penalized population risk minimizer.
  
For F-SGD, we only need to store the coefficients of $J_n = \Theta\big(n^{1/(2s+1)}\big)$ basis functions. Assuming the computational cost of evaluating one basis function at one point is $\Theta(1)$, the computational cost at the $i$-th step would be $\Theta(J_i) = \Theta\big(i^{1/(2s+1)}\big)$, and the total computational cost at the $n$-th step would be $\Theta\big(n^{1+1/(2s+1)}\big)$. Both computational and memory storage complexity would be improved by a polynomial factor compared to reproducing kernel methods and, as mentioned previously, by a logarithmic factor compared to Sieve-SGD. For clarity, we include Table~\ref{table1}, which summarizes this comparison.

Moreover, \cite{zhang2022sieve} showed there is no estimator with $o\big(n^{1/(2s+1)}\big)$-sized space expense that can achieve the minimax-rate for estimating $f \in \tilde{\mathcal{W}}_p(s,Q, \{\psi_j\})$ \cite[Theorem 6.5]{zhang2022sieve}. Thus, F-SGD, with its $\Theta\big(n^{1/(2s+1)}\big)$-sized space expense, is (nearly) space-optimal.

\begin{table}[ht]
\centering
\renewcommand{\arraystretch}{1.2}
\begin{tabular}{p{4.2cm}p{4.2cm}p{4.2cm}}
\toprule
Method & Storage Complexity & Computational Complexity \\
\midrule
F-SGD & $\Theta(n^{1/(2s+1)})$ & $\Theta(n^{1+1/(2s+1)})$ \\
Sieve-SGD \citep{zhang2022sieve} & $\Theta(n^{1/(2s+1)}\log^2 n)$ & $\Theta(n^{1+1/(2s+1)}\log^2 n)$ \\
Kernel SGD \citep{4406a3ce-1e59-3ed8-8c25-9f7d75b81f14,6842642} & $\Theta(n)$ & $\Theta(n^2)$  \\
\bottomrule
\end{tabular}
\caption{\small Comparison of storage and cumulative computational complexity under the Sobolev ellipsoid $ \tilde{\mathcal{W}}_p(s, Q, \{\psi_j\}) $. The storage complexity refers to a single iteration, while the computational complexity accounts for the cumulative cost over all iterations.} \label{table1}
\end{table}

\section{Numerical Experiments}\label{sec:numerical}

\subsection{F-SGD for Additive Models on Simulated Data}
In this section, we present the numerical performance of F-SGD, utilizing the parameter selections outlined in Theorem \ref{thm:fixed} and Theorem \ref{theorem:l2loss}.

We first apply Theorem \ref{thm:fixed} and  consider the regression function
\begin{align}\label{sim}
f(\bX) = 5 + \sum_{k=1}^p\big[ \big(X^{(k)}\big)^4 - 2\big(X^{(k)}\big)^3 + \big(X^{(k)}\big)^2 - 1/30\big].
\end{align}
where each component function is the fourth Bernoulli polynomial. We employ the trigonometric basis, excluding the constant basis function, for each component function.  In this case, the smoothness parameter is $s = 2$. Consequently, $f$ belongs to $\mathcal{W}_p(2, Q)$ for some constant $Q$. We consider two different data generating processes: (a) $\bX$ has an uniform distribution over $[0,1]^p$, and (b) $X^{(k)} = \big(U^{(k-1)}+U^{(k)}\big)/2$ for $k = 1,2,\dots,p$ where $U^{(0)}=U^{(p)}$ and $\boldsymbol{U} = \big(U^{(1)},U^{(2)},\dots,U^{(p)} \big)^T$ is uniform on $[0,1]^p$. We set the noise $\varepsilon$ to follow a uniform distribution over $[-0.02, 0.02]$.

Figure \ref{Fig:fixed_p} provides empirical evidence that F-SGD converges at the minimax optimal rate, when the parameters are chosen according to Theorem \ref{thm:fixed}. When $\bX$ follows data generating process (a), we set $A=1$ and $B=0.5$ across three cases corresponding to $p=5 $, $ 30, $ and $80$. When $\bX$ follows data generating process (b), we set $A = 3$ and $ B=0.4$ for $p=5$; $A=7$ and $B=0.4$ for $p=30$; and $A=15 $ and $ B=0.3$ for $p=80$. 
In Figure \ref{Fig:fixed_p}(b), we observe that the performance of F-SGD exhibits an initial plateau phase, wherein the MSE remains relatively constant and does not exhibit significant reduction. This behavior is due to the fact that $J_i = \lfloor Bi^{1/(2s+1)}\rfloor$ is zero for small values of $i$, resulting in only the intercept term being estimated. With larger values of $p$, this plateau persists for more iterations, as a smaller $B$ is required when $p$ increases, thus extending the duration before more complex model components are introduced. However, as $i$ grows beyond this initial stage, the MSE begins to show a trend of minimax optimal convergence.
\begin{figure}[htbp]
    \centering
    \subfigure[$\bX$ follows data generating process (a) ]{\includegraphics[width=0.48\textwidth]{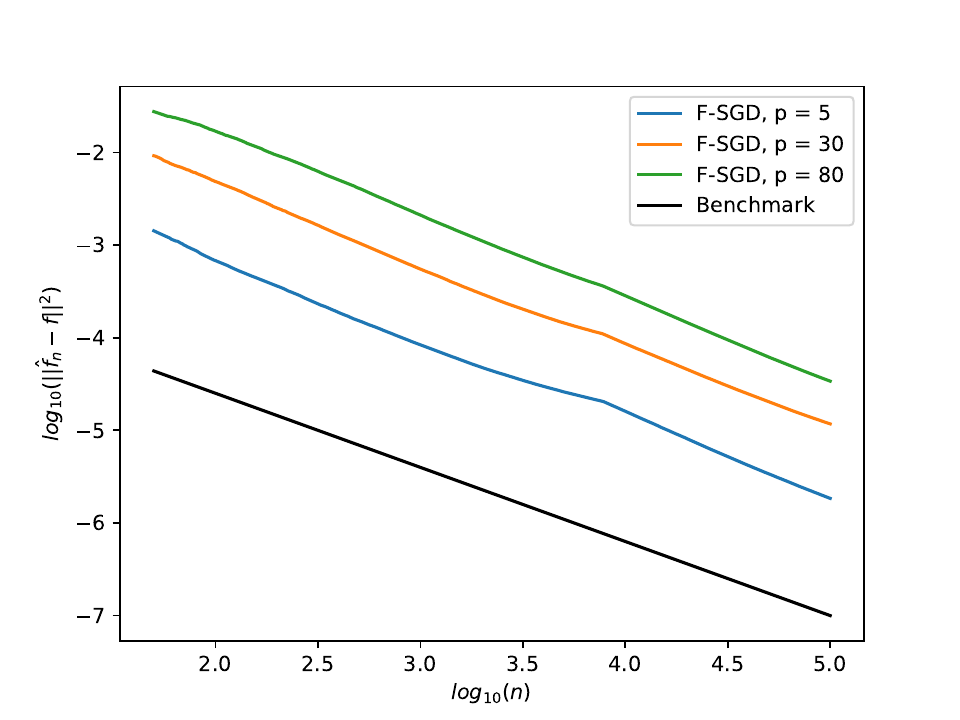}} 
    \subfigure[$\bX$ follows data generating process (b)]{\includegraphics[width=0.48\textwidth]{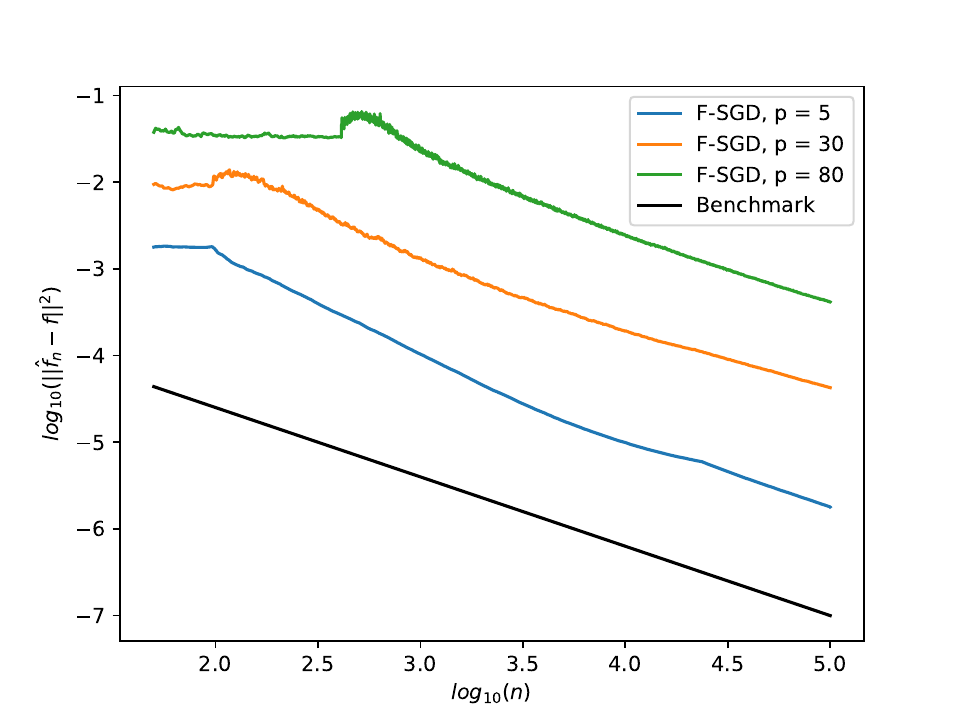}}
    \caption{\small $\log_{10} \norm{\hat{f}_n - f}^2$ against $\log_{10} n$.   The benchmark (black line) has slope $-2s/(2s+1)= -4/5$, which represents the minimax optimal rate. Each curve is calculated as the average of 20 repetitions. (a) $\bX$ follows data generating process (a). (b) $\bX$ follows data generating process (b).}\label{Fig:fixed_p}
\end{figure}

To examine how the MSE depends on $p$, we select parameters according to Theorem \ref{theorem:l2loss} and continue to use the regression function \eqref{sim}. The covariates $ \bX $ are generated from a uniform distribution over $ [0,1]^p $, and the noise $ \varepsilon $ is drawn from a uniform distribution over $ [-0.02, 0.02] $.  We set $A_2 = 5A_1 $, $ A_1=1$, and $B=0.5$ across all three cases corresponding to $p=5$, $ 30 $, and $80$.

Figure \ref{Fig:growing_p}(a) shows the convergence rate in terms of the sample size $ n $. The convergence exhibits three distinct behaviors, each corresponding to the three stages of training.  In the first stage, the MSE remains unchanged due to the absence of updates to $ \hat f_i $. This is followed by the second stage, a transitional period during which moderate learning occurs. For clarity, vertical dotted lines are used to indicate the second stage for $p = 80$, which spans from $n = 161$ to $n = 239$. 
Ultimately, in the third stage, the MSE attains minimax optimal convergence with respect to $ n $. Figure \ref{Fig:growing_p}(b) provides the convergence rate in terms of the dimension $p$. We select $p$ from the range $\{29,32,35,\dots,71\}$ and evaluate the MSE at $n = 10^5$. It is seen that the MSE convergence is close to minimax optimal in terms of $p$.
\begin{figure}[htbp]
    \centering
    \subfigure[Convergence rate in terms of $n$]{\includegraphics[width=0.48\textwidth]{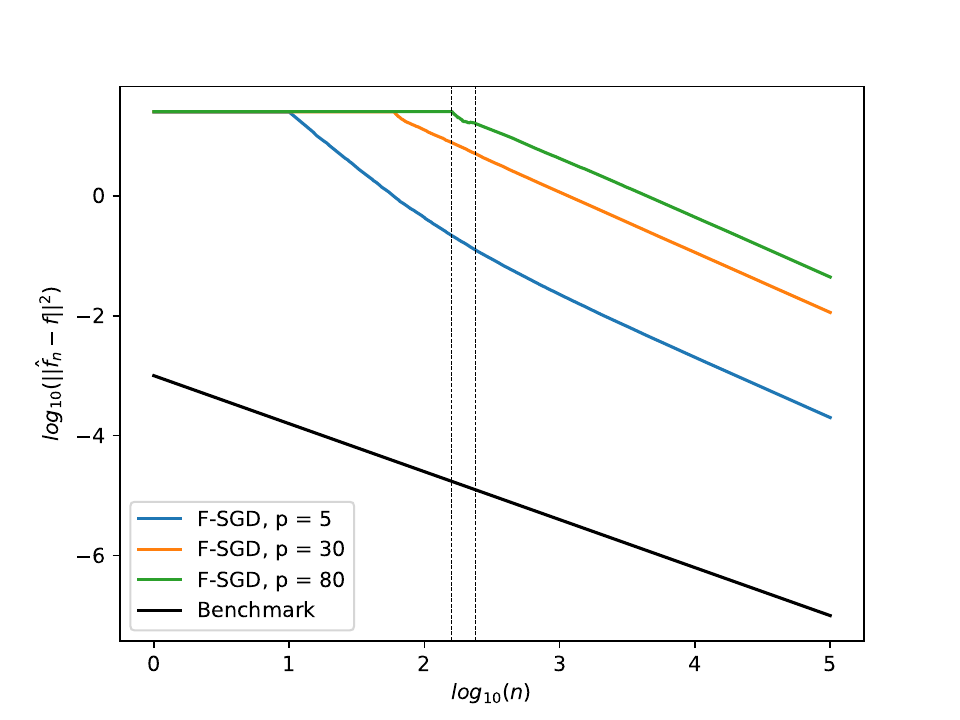}} 
    \subfigure[Convergence rate in terms of $p$]{\includegraphics[width=0.48\textwidth]{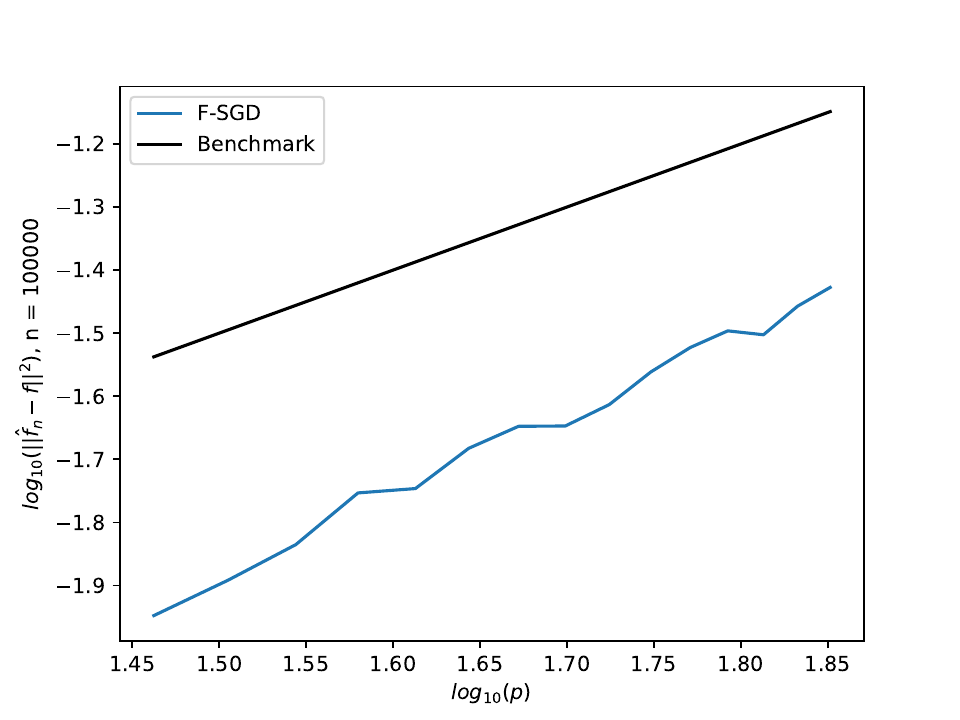}}
    \caption{\small (a) $\log_{10} \norm{\hat{f}_n - f}^2$ against $\log_{10} n$.   The benchmark (black line) has slope $-2s/(2s+1)= -4/5$, which represents the minimax optimal rate in terms of $n$. Vertical dotted lines are used to indicate the second stage for $p = 80$, which spans from $n = 161$ to $n = 239$.
    (b) $\log_{10} \norm{\hat{f}_n - f}^2$ against $\log_{10} p$ when $n=10^5$.   The benchmark (black line) has slope 1, which represents the minimax optimal rate in terms of $p$. Each curve is calculated as the average of 20 repetitions.}\label{Fig:growing_p}
\end{figure}

\subsection{F-SGD vs. Sieve-SGD for Univariate Models on Simulated Data}\label{subsec:sim_compare}
In this section, we evaluate the performance F-SGD in comparison to Sieve-SGD \citep{zhang2022sieve}  when $ p = 1$. The parameters provided by \cite{zhang2022sieve} are specifically tailored for the case when $f \in \tilde W_p(s,Q)$, as defined in \eqref{tilde_W}. Therefore, we likewise assume $f$ belongs to $ \tilde W_p(s,Q)$.   It is worth noting that \cite{zhang2022sieve} included two other methods: kernel ridge regression (KRR) \citep{wainwright_2019} and kernel SGD \citep{4406a3ce-1e59-3ed8-8c25-9f7d75b81f14}. The results indicated that Sieve-SGD exhibited comparable statistical performance but outperformed the other methods in terms of computational efficiency. In this section, we focus on comparing F-SGD and Sieve-SGD, but we do not include KRR and kernel SGD in our analysis. As elaborated in Section \ref{subsubsec:rkhs}, F-SGD exhibits a time complexity of $\Theta\big(n^{1+1/(2s+1)}\big)$, leading us to anticipate that it will also provide enhanced computational efficiency.

We consider the regression function $f(x) = B_4(x) =
x^4 - 2x^3 + x^2 - 1/30$, which is the fourth Bernoulli polynomial, and we utilize the trigonometric basis.  The smoothness parameter is $s = 2$.  Hence, $f$ belongs to Sobolev ellipsoid $W(2, Q)$ for some constant $Q$. We consider two different scenarios regarding the distribution of $X$: (a) $X$ has an uniform distribution over $[0,1]$, and (b) $X$ has an uniform distribution over $[0.25,0.75]$. It is worth mentioning that while the trigonometric basis functions are orthonormal with respect to the Lebesgue measure in scenario (a), they are not orthonormal in scenario (b). Additionally, we apply two different noise levels in these scenarios: In scenario (a), we use small noise, where $\varepsilon$ follows a uniform distribution over $[-0.02, 0.02]$. In scenario (b), we use  large noise, where $\varepsilon$ follows a uniform distribution over $[-0.2, 0.2]$. This setup mirrors that of Example 1 in the simulation study in \cite{zhang2022sieve}.

In the case of F-SGD, for scenario (a), we set $\gamma_i = 3/i$ and $J_i = \lfloor i^{1/(2s+1)} \rfloor = \lfloor i^{0.2} \rfloor$. In scenario (b), we set $\gamma_i = 3/i$ and $J_i = \lfloor 0.8i^{1/(2s+1)} \rfloor  = \lfloor 0.8i^{0.2} \rfloor$. As for Sieve-SGD, we adopt the same parameters as presented in \cite{zhang2022sieve}. Specifically, we use $\gamma_i = 3i^{-0.2}$ and set $J_n = n^{0.21}$. Additionally, we choose $t_i = j^{-2s} =j^{-4}$, which corresponds to the oracle component-specific learning rate (Theorem 6.1 in \cite{zhang2022sieve}).

Figure \ref{Fig:uniform} provides empirical evidence that F-SGD converges at the minimax optimal rate and exhibits performance closely resembling Sieve-SGD when equipped with an appropriately chosen component-specific learning rate.

\begin{figure}[htbp]
    \centering
    \subfigure[Uniform distribution over 0 to 1 ]{\includegraphics[width=0.48\textwidth]{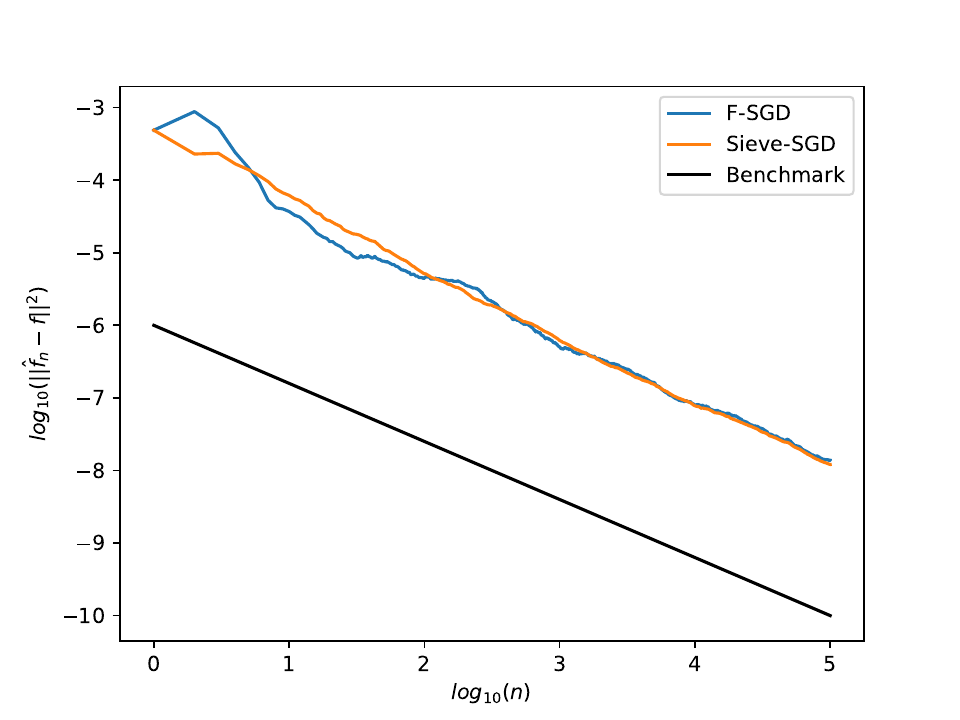}} 
    \subfigure[Uniform distribution over 0.25 to 0.75]{\includegraphics[width=0.48\textwidth]{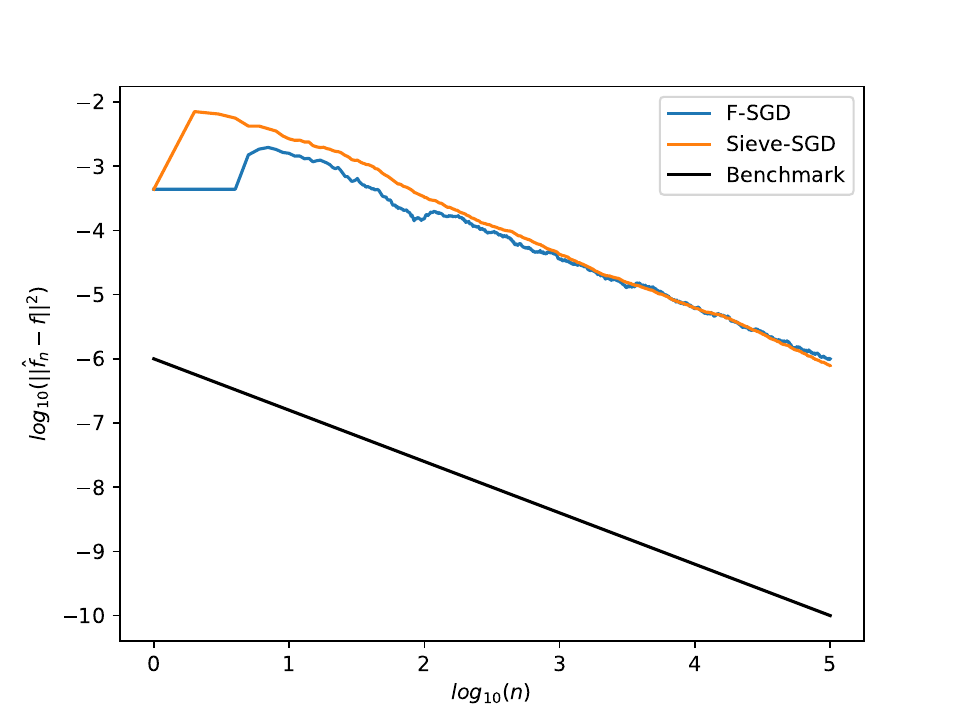}}
    \caption{\small $\log_{10} \norm{\hat{f}_n - f}^2$ against $\log_{10} n$.   The benchmark (black line) has slope $-2s/(2s+1)= -4/5$, which represents the minimax optimal rate. Each curve is calculated as the average of 100 repetitions. (a) $X$ is uniformly distributed over [0, 1]. (b) $X$ is uniformly distributed over [0.25, 0.75].}\label{Fig:uniform}
\end{figure}

\section{Discussion and Conclusion}\label{sec:discuss}
This paper introduced a novel estimator based on SGD for learning  nonparametric additive models.  We demonstrated its optimal theoretical performance, favorable computational complexity, and minimal memory storage requirements under some simple conditions on the data generating process. In this section, we briefly discuss some potential future extensions.  
\subsection{F-SGD with Lepski's Method}\label{subsec:Lepski}
One important advantage of F-SGD lies in the fact that the global learning rate $ \gamma_i $ can be chosen to be independent of $ s $, namely, $\gamma_i = \Theta(1/i)$, if we do not require the optimal convergence for $p$ (see Theorem \ref{thm:fixed}). 
Because of this selection, the sole dependence on $ s $ is through $ J_i $, which enables us to develop an online version of Lepski's method.

Lepski's method is widely used in adaptive nonparametric estimation when the smoothness parameter is unknown \citep{doi:10.1137/1135065}. In the classical batch setting, it achieves a rate slower than the nonadaptive minimax rate by only a logarithmic factor.  At each iteration step, let $s_{i,a}, s_{i,b} \in \mathcal{S}_i=\{s_0, s_0 + 1/\log i, s_0 + 2/\log i, \dots, s_1\}$ and define $J_{i,a} = \lfloor Bi^{\frac{1}{2s_{i,a}+1}} \rfloor$ and $ J_{i,b} = \lfloor Bi^{\frac{1}{2s_{i,b}+1}} \rfloor$ for some constant $B$. We then introduce $\hat f_i^{J_{i,l}}$, where $l \in \{a,b\}$,  as: 
\begin{align}\label{f_i:lepski}
    \hat f_i^{J_{i,l}}(\bX) = \hat f_{i-1}(\bX) + \gamma_i\left(Y_i - \hat f_{i-1}\left(\bX_i\right)\right)\left(1+\sum_{k=1}^p\sum_{j=1}^{J_{i,l}}\psi_{jk}\left(X_i^{(k)}\right)\psi_{jk}\left(\bX\right)\right).
\end{align}
To adaptively select the smoothing parameter, we adopt the core idea of Lepski's method at the current update. That is, we select the largest $s_{i,a} \in \mathcal{S}_i$ that satisfies the following inequality for all $s_{i,b} < s_{i,a}$, where $s_{i,b} \in \mathcal{S}_i$:
\begin{align}\label{update:lepski}
\left\|\hat f_i^{J_{i,b}} - \hat f_i^{J_{i,a}}\right\|^2 \leq \left(\frac{i}{\log i}\right)^{-\frac{2s_{i,b}}{2s_{i,b}+1}}.
\end{align}
Since $p_{\bX}(\bx) \leq C_2$, as per Assumption \ref{asmp:2}, we have
\begin{align*}
    \left\|\hat f_i^{J_{i,b}} - \hat f_i^{J_{i,a}}\right\|^2 \leq C_2\gamma_i^2 \left(Y_i - \hat{f}_{i-1}\left(X_i\right)\right)^2 \sum_{k=1}^p\sum_{j = J_{i,a}+1}^{J_{i,b}}\psi_{jk}^2\left(X_i^{(k)}\right).
\end{align*}
If we set $\gamma_i = A/i$ and absorb $C_2$ into $A$, we can simplify \eqref{update:lepski} and select the smoothing parameter by identifying the largest $s_{i,a} \in \mathcal{S}_i$ such that, for all $s_{i,b} < s_{i,a}$ where $s_{i,b} \in \mathcal{S}_i$, the following condition holds:
\begin{align}\label{ineq:cond_lepski}
    \gamma_i^2 \left(Y_i - \hat{f}_{i-1}\left(X_i\right)\right)^2 \sum_{k=1}^p\sum_{j = J_{i,a}+1}^{J_{i,b}}\psi_{jk}^2\left(X_i^{(k)}\right) \leq \left(\frac{i}{\log i}\right)^{-\frac{2s_{i,b}}{2s_{i,b}+1}}.
\end{align}
We summarize the procedure in Algorithm \ref{alg:1}.
\begin{algorithm}[ht]
\caption{F-SGD with Lepski's method \label{alg:1}}
For $i=1$ to $n$:

\quad Set $\gamma_i = A/i$ and $J_{i,l} = \lfloor Bi^{\frac{1}{2s_{i,l}+1}} \rfloor$ where $l \in \{a,b\}$ for some appropriate constants $A$ and $B$.

\quad Select the largest $s_{i,a} \in \mathcal{S}_i$ such that, for all $s_{i,b} < s_{i,a}$ where $s_{i,b} \in \mathcal{S}_i$, the condition \eqref{ineq:cond_lepski} holds.

\quad Update $\hat f_i$ according to \eqref{f_i:lepski} with $l=a$.
\end{algorithm}

The theoretical investigation of F-SGD with Lepski's method is beyond the scope of this paper, though we undertake a numerical study in the next section to highlight its potential. The results indicate that F-SGD with Lepski's method exhibits favorable performance.

\subsubsection{Simulations for F-SGD with Lepski's Method}

In this section, we present some numerical results for F-SGD with Lepski's method. We again consider the function $f(x) = B_4(x) =
x^4 - 2x^3 + x^2 - 1/30$ and use the trigonometric basis. In contrast to previous sections, here we assume the smoothness parameter $s$ is unknown, but that it lies within the range $[s_0, s_1]$ where $s_0=0.5$ and $s_1=8$. Similar to the simulation example in Section \ref{subsec:sim_compare}, the covariate $X$ is generated from two different distributions: (a) $X$ has a uniform distribution over $[0,1]$, and (b) $X$ has a uniform distribution over $[0.25,0.75]$. Also, we apply two different noise levels in these scenarios: In scenario (a), we use small noise, where $\varepsilon$ follows a uniform distribution over $[-0.02, 0.02]$. In scenario (b), we use  large noise, where $\varepsilon$ follows a uniform distribution over $[-0.2, 0.2]$.

We use F-SGD with Lepski's method (Algorithm \ref{alg:1}) to update $\hat f_i$. For scenario (a), we set $A=B=3$, and for scenario (b), we set $A = 4 $ and $ B=2$. Figure \ref{Fig:Lepski} illustrates that even without knowing $s$ a priori, F-SGD with Lepski's method can still achieve near minimax optimal convergence.

\begin{figure}[htbp]
    \centering
    \subfigure[Uniform distribution over 0 to 1]{\includegraphics[width=0.48\textwidth]{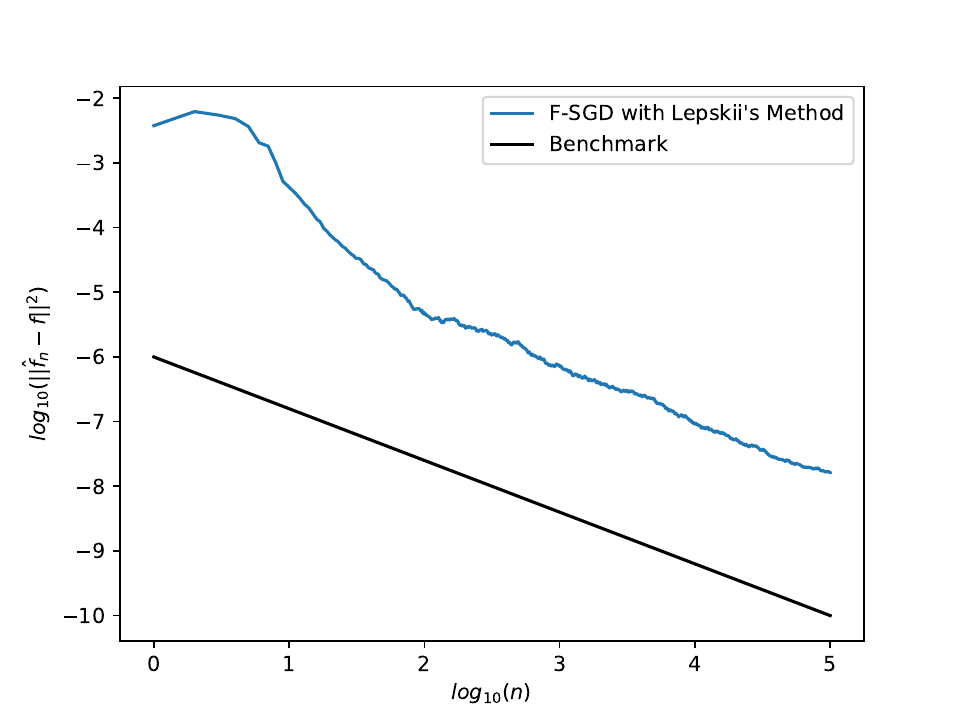}} 
    \subfigure[Uniform distribution over 0.25 to 0.75]{\includegraphics[width=0.48\textwidth]{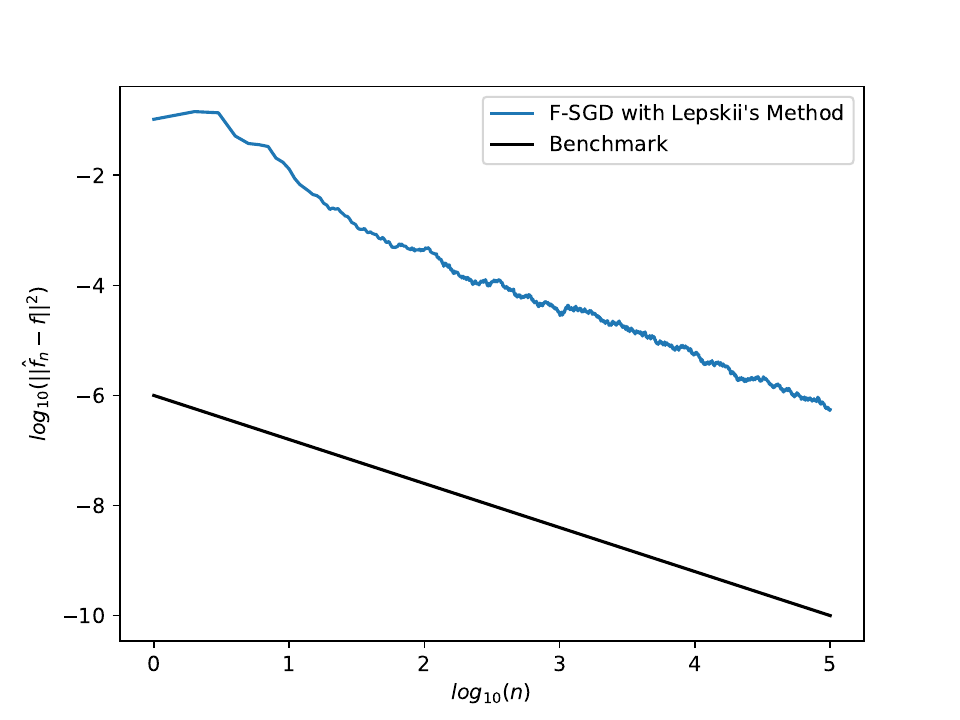}}
    \caption{\small $\log_{10} \norm{\hat{f}_n - f}^2$ against $\log_{10} n$, where $\hat f_n$ is the F-SGD with Lepski's method. The benchmark (orange line) has slope $-2s/(2s+1)= -4/5$, which represents the optimal rate. Each curve is calculated as the average of 30 repetitions. (a) $X$ is uniformly distributed over [0, 1]. (b) $X$ is uniformly distributed over [0.25, 0.75].}\label{Fig:Lepski}
\end{figure}

\subsection{General Convex Loss Functions}
Suppose we aim to minimize 
\begin{align*}
    \min_{f \in \mathcal{F}_p} \EE[\ell(Y, f(\bX))],
\end{align*}
where the loss function $\ell(u,v)$ is convex with respect to $v$ for any $u$. Following the same principle as F-SGD \eqref{update:l2loss}, we  can update $\hat f_i$ as follows:
\begin{align*}
    &\hat{f}_i(\bX) = \hat{f}_{i-1}(\bX) - \gamma_i\frac{\partial \ell(u,v)}{\partial v}\vert_{u = Y_i,\; v = \hat f_{i-1}(\bX)} \left(1+\sum_{k=1}^p\sum_{j=1}^{J_i}\psi_{jk}\left(X_i^{(k)}\right)\psi_{jk}\left(\bX\right)\right).
\end{align*}
Exploring the selection of parameters and characterizing the theoretical performance of this estimator are interesting directions for future research. Due to its simplicity, extensions based on our approach may be more amenable to theoretical analysis compared to, for example, Sieve-SGD \citep{zhang2022sieve}.

\section{Proofs}\label{appdx:proof}

In this section, we provide proofs of Theorem \ref{theorem:l2loss} and Theorem \ref{thm:fixed}. The proof of Theorem \ref{thm:polynomial} is provided in the Supplementary Material \citep{chen2025supplement}. Before proceeding, we first state some useful lemmas. The first lemma addresses a simple recursive inequality that will be applied repeatedly throughout the proofs.
\begin{lemma}\label{lem}
    Suppose for $i\geq i_0$, we have $e_i \leq a_ie_{i-1} + b_i$ with $a_i,b_i,e_i >0$. Then for any $i\geq i_0$,
    \begin{align}\label{lem:inq}
    e_i \leq e_{i_0-1}\prod_{k=i_0}^ia_k + \sum_{k=i_0}^{i-1}\left(\prod_{j=k+1}^i a_j \right)b_k + b_i.
    \end{align}
\end{lemma}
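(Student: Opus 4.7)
The plan is to prove this by straightforward induction on $i$, starting at $i = i_0$. Since the recursion $e_i \leq a_i e_{i-1} + b_i$ only binds the consecutive pair $(e_{i-1}, e_i)$, induction is the natural tool: at each step we simply unroll the recursion one more time and reorganize the resulting expression into the form stated in \eqref{lem:inq}.

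For the base case $i = i_0$, the claimed bound reads $e_{i_0} \leq e_{i_0 - 1} a_{i_0} + b_{i_0}$ (the empty sum $\sum_{k=i_0}^{i_0 - 1}$ contributes zero), which is exactly the hypothesis $e_{i_0} \leq a_{i_0} e_{i_0 - 1} + b_{i_0}$. For the inductive step, assume the bound holds at index $i - 1$. Multiplying both sides by $a_i$ and adding $b_i$, the hypothesis gives
\begin{align*}
e_i &\leq a_i e_{i-1} + b_i \\
&\leq a_i\left(e_{i_0-1}\prod_{k=i_0}^{i-1}a_k + \sum_{k=i_0}^{i-2}\Big(\prod_{j=k+1}^{i-1}a_j\Big)b_k + b_{i-1}\right) + b_i.
\end{align*}
The first term becomes $e_{i_0-1}\prod_{k=i_0}^{i}a_k$. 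Inside the sum, $a_i\prod_{j=k+1}^{i-1}a_j = \prod_{j=k+1}^{i}a_j$, extending the product. Finally, the leftover term $a_i b_{i-1}$ matches the $k = i-1$ contribution of the target sum, since $\prod_{j=i}^{i}a_j = a_i$. Regrouping yields exactly the right-hand side of \eqref{lem:inq}, completing the induction.

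There is no real obstacle here; the only care needed is keeping the index bookkeeping consistent, in particular recognizing that the ``tail'' term $a_i b_{i-1}$ produced by the inductive step is precisely the missing $k = i-1$ summand, and that the isolated $b_i$ sits outside the sum (rather than being absorbed into it with an empty product $\prod_{j=i+1}^{i}a_j = 1$, which is an equivalent but less symmetric way to write the same bound). Positivity of $a_i, b_i, e_i$ is used only to ensure the inequalities are preserved under multiplication by $a_i$.
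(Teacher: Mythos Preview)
Your proof is correct and follows essentially the same approach as the paper's: both argue by induction on $i$, verifying the base case $i=i_0$ directly from the hypothesis and then unrolling the recursion one step while absorbing the term $a_i b_{i-1}$ as the new $k=i-1$ summand. The paper's version is terser but the argument is identical.
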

\begin{proof}
    This lemma can be proved by induction. Note that the inequality \eqref{lem:inq} holds when $i=i_0$. In addition, suppose \eqref{lem:inq} holds for $i$. Then,
\begin{align*}
    e_{i+1} &\leq a_{i+1}e_{i} + b_{i+1}\\
    &\leq a_{i+1}\left( e_{i_0-1}\prod_{k=i_0}^ia_k + \sum_{k=i_0}^{i-1}\left(\prod_{j=k+1}^i a_j \right)b_k + b_i \right) + b_{i+1}\\
    &= e_{i_0-1}\prod_{k=i_0}^{i+1}a_k + \sum_{k=i_0}^{i}\left(\prod_{j=k+1}^{i+1} a_j \right)b_k + b_{i+1}. \tag*{\qedhere}
\end{align*}
\end{proof}
We next introduce several notations before stating the second lemma. Let
\begin{align*}
    e_i &= \mathbb{E}\left[\|\hat{f}_i - f\|^2\right], 
    & v_i &= \mathbb{E}\left[ \int \left( \hat{f}_{i}(\boldsymbol{x}) - f(\boldsymbol{x}) \right)^2 d\boldsymbol{x} \right].
\end{align*}
Define
\begin{align}
    y_{i-1}^2 &= \mathbb{E}\left[ \left| \langle \hat{f}_{i-1} - f, 1 \rangle \right|^2 \right] 
    + \sum_{k=1}^p \sum_{j=1}^{J_i} \mathbb{E}\left[ \left| \langle \hat{f}_{i-1} - f, \psi_{jk} \rangle \right|^2 \right].\label{def:yi}
\end{align}
Throughout the proof, the expectation operator $\EE$ is taken with respect to all of the randomness, unless it is explicitly stated otherwise.

Here, $e_i$ is the quantity we ultimately aim to bound, $v_i$ is the $\ell_2$ error under the uniform measure, and $y_{i-1}$ is an intermediate term that arises from the cross term when expanding $\|\hat{f}_i - f\|^2$. The next lemma provides a lower bound for $y_{i-1}^2$.
\begin{lemma}\label{lem:general_bound}
Let $f_k = \sum_{j=1}^\infty \theta_{f,j}^{(k)} \psi_{jk}$ with 
$\theta_{f,j}^{(k)} = \int f_k(x)\psi_{jk}(x)\,dx$, and set 
$f_{k,i} = \sum_{j=1}^{J_i} \theta_{f,j}^{(k)}\psi_{jk}$, 
$\check{f}_i = \alpha + \sum_{k=1}^p f_{k,i}$.
For any $g = \tau + \sum_{k=1}^p g_k$ with $g_k \in \mathcal{H}_1(s)$ 
and $\tau \in \mathbb{R}$, also write 
$g_k = \sum_{j=1}^\infty \theta_{g,j}^{(k)} \psi_{jk}$, 
$g_{k,i} = \sum_{j=1}^{J_i} \theta_{g,j}^{(k)}\psi_{jk}$, 
and $\check{g}_i = \tau + \sum_{k=1}^p g_{k,i}$. Then
\begin{align}
y_{i-1}^2
\geq~\frac{e_{i-1}}{v_{i-1}}\Bigg(\frac{e_{i-1}}{2}- 4\left(\|f - g\| + \|\check g_i - \check f_i\|\right)^2 - \frac{C_2\sum_{k=1}^p\|g_k\|_s^2}{J_i^{2s}}\Bigg). \label{ineq:general_bound}
\end{align}
\end{lemma}
\begin{proof}[Proof of Lemma \ref{lem:general_bound}]
Since $ \hat{f}_{i-1} $ and $ \check f_i $ belong to $\text{span}\{\psi_0,\,\psi_{jk}: j=1,2,\dots,J_i \text{ and } k = 1,2,\dots,p\}  $ where $\psi_0 \equiv 1$, we can write $ \hat{f}_{i-1}-\check f_i = \hat\alpha_{i-1}+\sum_{k=1}^p\sum_{j=1}^{J_i} \hat\beta_{i-1,j}^{(k)}\psi_{jk} $, where 
\begin{align}
\hat\beta_{i-1,j}^{(k)} = \int \left(\hat{f}_{i-1}(\boldsymbol{x})-\check{f}_i(\boldsymbol{x})\right)\psi_{jk}\left(x^{(k)}\right) d\boldsymbol{x}, \qquad \hat\alpha_{i-1} = \int \left(\hat{f}_{i-1}(\boldsymbol{x})-\check{f}_i(\boldsymbol{x})\right) d\boldsymbol{x}.    \label{notation:lem} 
\end{align}
Hence, we have
\begin{align*}
    &\left|\EE\left[\innerx{\hat{f}_{i-1}-f}{\hat{f}_{i-1}-\check f_i}\right]\right|^2\notag\\ 
    &=~ \bigg(\EE\left[\int \left(\hat{f}_{i-1}(\bx)-f(\bx)\right)\hat\alpha_{i-1}p_{\bX}(\bx)d\bx \right]\notag\\
    &\qquad + \sum_{k=1}^p\sum_{j=1}^{J_i}\EE\left[\int \left(\hat{f}_{i-1}(\bx)-f(\bx)\right)\hat\beta_{i-1,j}^{(k)}\psi_{jk}\left(x^{(k)}\right)p_{\bX}(\bx)d\bx \right]
 \bigg)^2.
\end{align*}
Using the  Cauchy-Schwarz inequality, we know 
\begin{align}
    &\left|\EE\left[\innerx{\hat{f}_{i-1}-f}{\hat{f}_{i-1}-\check f_i}\right]\right|^2\notag\\ 
 &\leq \Bigg(\EE\left[ \left(\int \left(\hat{f}_{i-1}(\bx)-f(\bx)\right)p_{\bX}(\bx)d\bx\right)^2 \right]\notag\\
 &~ + \sum_{k=1}^p\sum_{j=1}^{J_i}\EE\left[ \left(\int \left(\hat{f}_{i-1}(\bx)-f(\bx)\right)\psi_{jk}\left(x^{(k)}\right)p_{\bX}(\bx)d\bx\right)^2 \right]   \Bigg) \EE\Bigg[\hat\alpha_{i-1}^2+ \sum_{k=1}^p\sum_{j=1}^{J_i} \Big(\hat\beta_{i-1,j}^{(k)}\Big)^2\Bigg] \notag \\
 &=~  \Bigg(\mathbb{E}\left[\left|\innerx{\hat{f}_{i-1}-f}{1}\right|^2\right]+\sum_{k=1}^p\sum_{j=1}^{J_i}\mathbb{E}\left[\left|\innerx{\hat{f}_{i-1}-f}{\psi_{jk}}\right|^2\right]\Bigg)  \EE\Bigg[\hat\alpha_{i-1}^2+ \sum_{k=1}^p\sum_{j=1}^{J_i} \Big(\hat\beta_{i-1,j}^{(k)}\Big)^2\Bigg]\\
 &=~y_{i-1}^2\EE\Bigg[\hat\alpha_{i-1}^2+ \sum_{k=1}^p\sum_{j=1}^{J_i} \Big(\hat\beta_{i-1,j}^{(k)}\Big)^2\Bigg], \label{ineq:inter2}
\end{align}
with
\begin{align}
     &\EE\Bigg[\hat\alpha_{i-1}^2+ \sum_{k=1}^p\sum_{j=1}^{J_i} \Big(\hat\beta_{i-1,j}^{(k)}\Big)^2\Bigg] \notag\\ 
    &\leq~ \EE\Bigg[ \int \left(\hat\alpha_{i-1}+\sum_{k=1}^p\sum_{j=1}^{J_i}\hat\beta_{i-1,j}^{(k)}\psi_{jk}\left(x^{(k)}\right) - \sum_{k=1}^p\sum_{j = J_i+1}^\infty \theta_{f,j}^{(k)}\psi_{jk}\left(x^{(k)}\right)\right)^2 d\bx\Bigg] \notag\\
    &= v_{i-1}. \label{ineq:v}
\end{align}
Furthermore, note that 
\begin{align}
    &\left|\EE\left[\innerx{\hat{f}_{i-1}-f}{\hat{f}_{i-1}-\check f_i}\right]\right|^2\notag\\ 
    &=~ \left|\EE\left[\normx{\hat{f}_{i-1}-f}^2\right] + \EE\left[\innerx{\hat{f}_{i-1}-f}{f-\check f_i}\right]\right|^2  \notag\\
    &\geq~ \left(\EE\left[\normx{\hat{f}_{i-1}-f}^2\right]\right)^2 + 2\left(\EE\left[\normx{\hat{f}_{i-1}-f}^2\right]\right) \left(\EE\left[\innerx{\hat{f}_{i-1}-f}{f-\check f_i}\right]\right). \label{new3}
\end{align}
By the AM-GM inequality, we know 
\begin{align*}
    2\EE\left[\innerx{\hat{f}_{i-1}-f}{f- \check f_i}\right] \geq -\frac{1}{2}\EE\left[\normx{\hat{f}_{i-1}-f}^2\right] - 2\normx{f-\check f_i}^2.
\end{align*}
Hence, continuing from \eqref{new3}, we obtain
\begin{align}
    &\left|\EE\left[\innerx{\hat{f}_{i-1}-f}{\hat{f}_{i-1}-\check f_i}\right]\right|^2\notag\\
    &\geq~  \left(\EE\left[\normx{\hat{f}_{i-1}-f}^2\right]\right)^2 - \left(\EE\left[\normx{\hat{f}_{i-1}-f}^2\right]\right) \left(\frac{1}{2}\EE\left[\normx{\hat{f}_{i-1}-f}^2\right] + 2\normx{f-\check f_i}^2\right) \notag\\
    &=~ \frac{1}{2}\left(\EE\left[\normx{\hat{f}_{i-1}-f}^2\right]\right)^2 - 2\left(\EE\left[\normx{\hat{f}_{i-1}-f}^2\right]\right) \left(\normx{f-\check f_i}^2\right)\\
    &=~ \frac{1}{2}e_{i-1}^2 - 2e_{i-1} \normx{f-\check f_i}^2.
    \label{ineq:inter4}
\end{align}
Note that
\begin{align}
    \normx{f - \check f_i}^2\leq  \left( \normx{f - g} + \normx{g - \check g_i} + \normx{\check g_i - \check f_i} \right)^2, \label{ineq:f-fi}
\end{align}
and
\begin{align}
    \normx{g - \check g_i}^2 &= \int \left( \sum_{k=1}^p\sum_{j=J_i+1}^\infty \theta_{g,j}^{(k)}\psi_{jk}\left(x^{(k)}\right) \right)^2 p_{\bX}(\bx) d\bx \notag\\
    &\leq C_2 \int \left( \sum_{k=1}^p\sum_{j=J_i+1}^\infty \theta_{g,j}^{(k)}\psi_{jk}\left(x^{(k)}\right) \right)^2 d\bx \notag\\
    &= C_2 \sum_{k=1}^p\sum_{j = J_i+1}^\infty \left(\theta_{g,j}^{(k)}\right)^2 \notag\\
    &\leq  C_2\sum_{k=1}^p \frac{1}{J_i^{2s}}\sum_{j = J_i+1}^\infty \left(\theta_{g,j}^{(k)}\right)^2j^{2s} \leq \frac{C_2\sum_{k=1}^p\|g_k\|_s^2}{J_i^{2s}}.\label{ineq:trunc}
\end{align}  
Continuing from \eqref{ineq:f-fi}, we utilize \eqref{ineq:trunc}, and obtain 
\begin{align}
   \normx{f - \check f_i}^2&\leq  2\left(\|f - g\| + \|\check g_i - \check f_i\|\right)^2 + 2\|g - \check g_i \|^2 \notag\\
   &\leq 2\left(\|f - g\| + \|\check g_i - \check f_i\|\right)^2 + \frac{2C_2\sum_{k=1}^p\|g_k\|_s^2}{J_i^{2s}}.\label{ineq:f-fi_last}
\end{align}
Therefore, combining inequalities \eqref{ineq:inter2}, \eqref{ineq:v}, \eqref{ineq:inter4} and \eqref{ineq:f-fi_last}, we have 
\begin{align*}
y_{i-1}^2
\geq~\frac{e_{i-1}^2}{2v_{i-1}}- 4\left(\|f - g\| + \|\check g_i - \check f_i\|\right)^2\frac{e_{i-1}}{v_{i-1}} - \frac{4e_{i-1}C_2\sum_{k=1}^p\|g_k\|_s^2}{v_{i-1}J_i^{2s}}. \tag*{\qedhere}
\end{align*}
\end{proof}

Now we start to prove Theorem \ref{theorem:l2loss}.
\begin{proof}[Proof of Theorem \ref{theorem:l2loss}]
To simplify the notation, at each iteration step $i$, we let $r_i = Y_i - \hat{f}_{i-1}(\boldsymbol{X}_i)$.  Using the recursive relationship \eqref{update:l2loss} for $\hat{f}_i$, we have
\begin{align}
    \normx{\hat{f}_i-f}^2 &= \normx{\hat{f}_{i-1}-f+\gamma_ir_i\left(1+\sum_{k=1}^p\sum_{j=1}^{J_i}\psi_{jk}\left(X_i^{(k)}\right)\psi_{jk}\right)}^2 \notag\\
    &= \normx{\hat{f}_{i-1} - f}^2 + 2\innerx{\hat{f}_{i-1}-f}{\gamma_ir_i\left(1+\sum_{k=1}^p\sum_{j=1}^{J_i}\psi_{jk}\left(X_i^{(k)}\right)\psi_{jk}\right)} \notag\\
    &\qquad +\gamma_i^2r_i^2\normx{1+\sum_{k=1}^p\sum_{j=1}^{J_i}\psi_{jk}\left(X_i^{(k)}\right)\psi_{jk}}^2. \label{equa:fi-f}
\end{align}
Next, we take the expectation of the last term in \eqref{equa:fi-f} and apply the law of iterated expectations:
\begin{align}
    &\EE \left[\gamma_i^2r_i^2\normx{1+\sum_{k=1}^p\sum_{j=1}^{J_i}\psi_{jk}\left(X_i^{(k)}\right)\psi_{jk}}^2\right]\notag\\
     &=~ \EE\sbr{\EE\sbr{\gamma_i^2r_i^2\normx{1+\sum_{k=1}^p\sum_{j=1}^{J_i}\psi_{jk}\left(X_i^{(k)}\right)\psi_{jk}}^2\Bigg|\boldsymbol{X}_i, (\boldsymbol{X}_k, Y_k)_{k=1}^{i-1}}}.\label{new1}
\end{align}
Note that $\EE [r_i^2 \mid \boldsymbol{X}_i, (\boldsymbol{X}_k, Y_k)_{k=1}^{i-1}] \leq \sigma^2 + \big(\hat{f}_{i-1}(\boldsymbol{X}_i) - f(\boldsymbol{X}_i) \big)^2$. Also, we have the inequality
\begin{align}
    \normx{1+\sum_{k=1}^p\sum_{j=1}^{J_i}\psi_{jk}\left(X_i^{(k)}\right)\psi_{jk}}^2 &= \int \left( 1+\sum_{k=1}^p\sum_{j=1}^{J_i}\psi_{jk}\left(X_i^{(k)}\right)\psi_{jk}\left(x^{(k)}\right)\right)^2p_{\bX}(\boldsymbol{x})d\boldsymbol{x}\notag\\
    &\leq C_2\int \left(1+ \sum_{k=1}^p\sum_{j=1}^{J_i}\psi_{jk}\left(X_i^{(k)}\right)\psi_{jk}\left(x^{(k)}\right)\right)^2d\boldsymbol{x}\notag\\
    &=C_2\left(1+\sum_{k=1}^p\sum_{j=1}^{J_i}\psi_{jk}^2\left(X_i^{(k)}\right)\right), \label{new2}
\end{align}
where for the first inequality, we use Assumption \ref{asmp:2} which says that $p_{\bX}(\bx) \leq C_2$, and for the last equality, we use the fact that $\psi_{jk}$ are centered \big($\int \psi_{jk}(x) dx = 0$\big), and for a given $k$, the basis $\{\psi_{jk}\}_j$ is orthonormal. Thus, putting \eqref{new1} and \eqref{new2} together, we have
\begin{align}
    &\EE \left[\gamma_i^2r_i^2\normx{1+\sum_{k=1}^p\sum_{j=1}^{J_i}\psi_{jk}\left(X_i^{(k)}\right)\psi_{jk}}^2\right]\notag\\
    &\leq~ \EE \left[\gamma_i^2(\sigma^2 + (\hat{f}_{i-1}(\boldsymbol{X}_i) - f(\boldsymbol{X}_i) )^2) C_2\left(1+\sum_{k=1}^p\sum_{j=1}^{J_i}\psi_{jk}^2\left(X_i^{(k)}\right)\right)\right] \notag\\
    &\leq~ \gamma_i^2\left(\sigma^2 + \EE \Big[\big\|\hat{f}_{i-1}-f\big\|^2\Big]\right)C_2\left(1+pJ_i M^2\right)\\
    &= \gamma_i^2\left(\sigma^2 + e_{i-1}\right)C_2\left(1+pJ_i M^2\right)
    , \label{ineq:term3}
\end{align}
where the last inequality directly follows from the boundedness of $\psi_{jk}$. 
To compute the second term in \eqref{equa:fi-f}, we first expand the inner product
\begin{align}
    &\EE\left[\innerx{\hat{f}_{i-1}-f}{\gamma_ir_i \left(1+\sum_{k=1}^p\sum_{j=1}^{J_i}\psi_{jk}\left(X_i^{(k)}\right)\psi_{jk}\right)}\right]\notag\\
    &=~\gamma_i \EE \left[\innerx{\hat{f}_{i-1}-f}{\rbr{f(\boldsymbol{X}_i) - \hat{f}_{i-1}(\boldsymbol{X}_i)}\left(1+\sum_{k=1}^p\sum_{j=1}^{J_i}\psi_{jk}\left(X_i^{(k)}\right)\psi_{jk}\right)}\right] \notag \\
&=~-\gamma_i\Bigg(\mathbb{E}\left[\left|\innerx{\hat{f}_{i-1}-f}{1}\right|^2\right]+\sum_{k=1}^p\sum_{j=1}^{J_i}\mathbb{E}\left[\left|\innerx{\hat{f}_{i-1}-f}{\psi_{jk}}\right|^2\right]\Bigg)\\
&=~-\gamma_i y_{i-1}^2, \label{ineq:inter1}
\end{align}
where for the second equality we use the fact that $\hat f_{i-1}$ is independent of $\bX_i$, and for the last equality we recall the notation \eqref{def:yi}.

Then according to \eqref{equa:fi-f}, \eqref{ineq:term3} and \eqref{ineq:inter1}, we know
\begin{align}
    e_i \leq e_{i-1} - 2\gamma_iy_{i-1}^2 + \gamma_i^2\left(\sigma^2 + e_{i-1}\right)C_2\left(1+pJ_i M^2\right). \label{ineq:general_recursion}
\end{align}

To prove Theorem~\ref{theorem:l2loss}, we extend Lemma~\ref{lem:general_bound} 
to obtain a lower bound for $y_{i-1}^2$ independent of $v_{i-1}$.
Recall the notation from Lemma~\ref{lem:general_bound} that  
$f_k = \sum_{j\ge1} \theta_{f,j}^{(k)}\psi_{jk}$, 
$\theta_{f,j}^{(k)} = \int f_k(x)\psi_{jk}(x)\,dx$,  
$f_{k,i} = \sum_{j\le J_i} \theta_{f,j}^{(k)}\psi_{jk}$, 
and $\check{f}_i = \alpha + \sum_{k=1}^p f_{k,i}$.  
Also recall $\hat{\beta}_{i-1,j}^{(k)}$ and $\hat{\alpha}_{i-1}$ defined in~\eqref{notation:lem}.  
Under Assumption~\ref{asmp:2}, which states $p_{\bX}(\bx) \ge C_1$, we have
\begin{align}
    v_{i-1}&\leq~  \frac{1}{C_1}\EE\Bigg[\int \left(\hat\alpha_{i-1}+\sum_{k=1}^p\sum_{j=1}^{J_i}\hat\beta_{i-1,j}^{(k)}\psi_{jk}\left(x^{(k)}\right) - \sum_{k=1}^p\sum_{j = J_i+1}^\infty \theta_{f,j}^{(k)}\psi_{jk}\left(x^{(k)}\right)\right)^2 p_{\bX}(\bx) d\bx \Bigg]\notag\\
    &=~ \frac{e_{i-1}}{C_1},\label{ineq:inter3}
\end{align}
and also, 
\begin{align}
    \normx{\check g_i-\check f_i}^2 &\leq C_2\int (\check g_i(\bx)-\check f_i(\bx))^2 d\bx \notag\\
    &\leq  C_2\int (g(\bx)-f(\bx))^2 d\bx \notag\\
    &\leq \frac{C_2}{C_1}\normx{g-f}^2, \label{ineq:gi-fi}
\end{align}
Thus, using \eqref{ineq:inter3} and \eqref{ineq:gi-fi} in Lemma \ref{lem:general_bound}, we have
\begin{align*}
    y_{i-1}^2 \geq \frac{e_{i-1}C_1}{2}- 4\left(1+\sqrt{\frac{C_2}{C_1}}\right)^2\normx{g-f}^2 - \frac{4C_1C_2\sum_{k=1}^p\|g_k\|_s^2}{J_i^{2s}}.
\end{align*}
Hence, we have
\begin{align}
&\EE\left[\innerx{\hat{f}_{i-1}-f}{\gamma_ir_i \left(1+\sum_{k=1}^p\sum_{j=1}^{J_i}\psi_{jk}\left(X_i^{(k)}\right)\psi_{jk}\right)}\right]\notag\\
&\leq~-\frac{\gamma_i}{2}C_1e_{i-1}+ 4\gamma_iC_1\left(1+\sqrt{\frac{C_2}{C_1}}\right)^2\normx{g-f}^2 + \gamma_i\frac{4C_1C_2\sum_{k=1}^p\|g_k\|_s^2}{J_i^{2s}}. \label{ineq:term2_2}
\end{align}
By \eqref{equa:fi-f}, \eqref{ineq:term3}, and \eqref{ineq:term2_2}, we notice for any $i\geq 1$,
\begin{align}
    e_i & \leq \left(1-C_1\gamma_i + C_2\left(pM^2J_i+1\right)\gamma_i^2\right)e_{i-1}\notag\\
    &\qquad +C_2\sigma^2(pM^2J_i+1)\gamma_i^2 + 8 \gamma_iC_1C_2 \frac{\sum_{k=1}^p\|g_k\|_s^2}{J_i^{2s}}+8\gamma_iC_1\left(1+\sqrt{\frac{C_2}{C_1}}\right)^2\normx{g-f}^2 \notag\\
    &\leq \left(1-C_1\gamma_i + 2C_2pM^2J_i\gamma_i^2\right)e_{i-1}\notag\\
    &\qquad +2C_2\sigma^2pM^2J_i\gamma_i^2 + 8 \gamma_iC_1C_2 \frac{\sum_{k=1}^p\|g_k\|_s^2}{J_i^{2s}}+8\gamma_iC_1\left(1+\sqrt{\frac{C_2}{C_1}}\right)^2\normx{g-f}^2, \label{ineq:ei}
\end{align}
since $p,M,J_i\geq 1$. 

When $1\leq i \leq p/B$, note that we take $\gamma_i = J_i = 0$ and hence $\hat f_i = 0$.  When $p/B < i \leq p^{1+1/2s}$, since $Bi/p>1$, we know  $Bi/p \leq J_i = \lceil Bi/p \rceil \leq 2Bi/p$. Then, continuing from \eqref{ineq:ei}, we have
\begin{align}
    e_i & \leq \left(1-\frac{C_1A_1}{i}+\frac{4C_2BM^2A_1^2}{i}\right)e_{i-1}+ \frac{\tilde C_1 \normx{g-f}^2 + \tilde C_2 \sigma^2}{i} + \frac{\tilde C_3 p^{1+2s}}{i^{1+2s}}\frac{\sum_{k=1}^p\|g_k\|_s^2}{p},  \label{recur:2}
\end{align}
where $\tilde C_1 = 8A_1C_1(1+\sqrt{C_2/C_1})^2$, $\tilde C_2 = 4C_2BM^2A_1^2$, and $\tilde C_3 = 8A_1C_1C_2/B^{2s}$. Note that $A_1 \geq 2(2s+1)/C_1$ and $B \leq (2s+1)/(4C_2M^2A_1^2)$. Continuing from \eqref{recur:2}, we have for any $p/B < i \leq p^{1+1/2s}$,
\begin{align}
    e_i & \leq \left(1-\frac{2s+1}{i}\right)e_{i-1}+ \frac{\tilde C_1 \normx{g-f}^2 + \tilde C_2\sigma^2}{i} + \frac{\tilde C_3 p^{1+2s}}{i^{1+2s}}\frac{\sum_{k=1}^p\|g_k\|_s^2}{p} \notag\\
    & \leq \left(1-\frac{1}{i}\right)^{2s+1}e_{i-1}+ \frac{\tilde C_1 \normx{g-f}^2 + \tilde C_2\sigma^2}{i} + \frac{\tilde C_3 p^{1+2s}}{i^{1+2s}}\frac{\sum_{k=1}^p\|g_k\|_s^2}{p}\notag\\
    &\leq \left(1-\frac{1}{i}\right)^{2s+1}e_{i-1}+  \frac{ p^{1+2s}}{i^{1+2s}}\left(\tilde C_3\frac{\sum_{k=1}^p\|g_k\|_s^2}{p} +\tilde C_1 \normx{g-f}^2 + \tilde C_2\sigma^2 \right),
\end{align}
where we use Bernoulli's inequality, i.e., $(1+z)^r \geq 1+zr$ for all $ r \geq 1$ and $z \geq -1 $, for the second inequality.\footnote{The interested reader may wish to compare this recursion to \citep[Lemma 1]{chung1954stochastic} in the context of the Robbins–Monro stochastic approximation algorithm.} Note that 
$$
\prod_{j=k+1}^i \left(1-\frac{1}{j}\right)^{2s+1} = \bigg(\frac{k}{i}\bigg)^{2s+1}.
$$
We now invoke Lemma \ref{lem}, taking $i_0 = \lfloor p/B \rfloor+1$ and $i = \lfloor p^{1+1/2s} \rfloor$  in \eqref{lem:inq}, to obtain
\begin{align}
    e_{\lfloor p^{1+1/2s} \rfloor} &\leq \frac{1}{(2B)^{2s+1}}\frac{e_{\lfloor p/B_1 \rfloor}}{p^{(2s+1)/2s}} + 
    \tilde C_3\frac{\sum_{k=1}^p\|g_k\|_s^2}{p} +\tilde C_1 \normx{g-f}^2 + \tilde C_2\sigma^2\notag\\
    &\leq \tilde C_3\frac{\sum_{k=1}^p\|g_k\|_s^2}{p} +\tilde C_1 \normx{g-f}^2 + \frac{1}{(2B)^{2s+1}}\frac{\|f\|^2}{p} +\tilde C_2\sigma^2, \label{stage2}
\end{align}
where in the last equality, we use the fact that $e_{\lfloor p/B \rfloor} = \|f\|^2 $.

When $i > p^{1+1/2s}$, according to \eqref{ineq:ei} and the fact that $Bi^{1/(2s+1)} \leq J_i \leq 2Bi^{1/(2s+1)}$, we have
\begin{align}
    e_i & \leq \left(1-\frac{C_1A_2}{i}+\frac{4pC_2BM^2A_2^2}{i^{1+\frac{2s}{2s+1}}}\right)e_{i-1}+ \frac{\tilde C_4\left(p\sigma^2+\sum_{k=1}^p\|g_k\|_s^2\right)}{i^{1+\frac{2s}{2s+1}}} + \frac{\tilde C_5 \normx{g-f}^2}{i},  \label{recur:1}
\end{align}
where $\tilde C_4 =  \max\{4C_2BM^2A_2^2, 8A_2C_1C_2/B^{2s}\}$ and $\tilde C_5 = 8A_2C_1\big(1+\sqrt{C_2/C_1}\big)^2$. Note that when $A_2 \geq 2/C_1$ and $B \leq 1/(4C_2M^2A_2^2)$, continuing from \eqref{recur:1}, we have for any $i > p^{1+1/2s}$,
\begin{align*}
    e_i \leq \Big(1-\frac{1}{i}\Big)e_{i-1} + \frac{\tilde C_4\left(p\sigma^2+\sum_{k=1}^p\|g_k\|_s^2\right)}{i^{1+\frac{2s}{2s+1}}}+\frac{\tilde C_5 \normx{g-f}^2}{i}.
\end{align*}
We use Lemma \ref{lem} again, this time taking $i_0 = \lfloor p^{1+1/2s}\rfloor+1$ and $i=n$. We obtain
\begin{align}
    e_n &\leq e_{\lfloor p^{1+1/2s}\rfloor}\frac{p^{1+1/2s}}{n} + \frac{n - p^{1+1/2s}-1}{n}\tilde C_5 \normx{g-f}^2 + \sum_{i=\lfloor p^{1+1/2s}\rfloor+1}^{n-1}\frac{\tilde C_4\left(p\sigma^2+\sum_{k=1}^p\|g_k\|_s^2\right)}{ni^{\frac{2s}{2s+1}}} \notag\\
    &\leq  pn^{-\frac{2s}{2s+1}}C_0^{-\frac{1}{2s+1}}\left( \tilde C_3\frac{\sum_{k=1}^p\|g_k\|_s^2}{p} +\tilde C_1 \normx{g-f}^2 + \frac{1}{(2B)^{2s+1}}\frac{\|f\|^2}{p} +\tilde C_2\sigma^2\right)\notag\\
    & \qquad +\tilde C_5 \normx{g-f}^2 + \tilde C_4\left(p\sigma^2+\sum_{k=1}^p\|g_k\|_s^2\right)(2s+1)n^{-\frac{2s}{2s+1}}, \label{en}
\end{align}
where for the last inequality we use \eqref{stage2} and the fact that $\frac{1}{n}\sum_{i=1}^n i^{-\frac{2s}{2s+1}} \leq (2s+1)n^{-\frac{2s}{2s+1}}$.
The inequality \eqref{en} holds for $g = \tau + \sum_{k=1}^p g_k$ with any $g_k \in \mathcal{H}_1(s)$ for  $k=1,2,\dots,p$ and any $\tau \in \RR$.

So we conclude that there exists a constant $C$ such that
\begin{equation*}
    \EE \left[\norm{\hat{f}_n - f}^2 \right] \leq C\inf_{g\in \mathcal{H}_p(s)} \Bigg\{ \|g-f\|^2 + \left(p\sigma^2+\|f\|^2+\sum_{k=1}^p\|g_k\|^2_s \right) n^{-\frac{2s}{2s+1}}\Bigg\}.
\end{equation*}
Here $C$ depends on $C_0,C_1,C_2,M$, and $s$.
\end{proof}
\begin{proof}[Proof of Theorem \ref{thm:fixed}]
The proof of Theorem \ref{thm:fixed} follows a similar argument as that of Theorem \ref{theorem:l2loss}. 

When $i < 1/B^{2s+1}$, we note that $J_i = 0$. Similar to \eqref{ineq:term3}, we have $\mathbb{E}[\gamma_i^2r_i^2]\leq \gamma_i^2(\sigma^2+e_{i-1}^2)$. Therefore, using the second equality in~\eqref{ineq:inter1} and taking expectations in~\eqref{equa:fi-f} with $J_i = 0$, we obtain
\begin{align*}
    e_i &\leq (1+\gamma_i^2)e_{i-1} + \gamma_i^2\sigma^2 - 2\gamma_i \mathbb{E}\bigg[\left|\innerx{\hat{f}_{i-1}-f}{1}\right|^2\bigg] \\
    & \leq (1+\gamma_i^2)e_{i-1} + \gamma_i^2\sigma^2.
\end{align*}
Note that we have  
\begin{align*}
    \prod_{j=k+1}^i\Bigg(1+\frac{A^2}{j^2}\Bigg) &= \exp\Bigg\{\sum_{j=k+1}^i\log\Bigg(1+\frac{A^2}{j^2}\Bigg)\Bigg\}\\
    &\leq \exp\Bigg\{\sum_{j=k+1}^i\frac{A^2}{j^2}\Bigg\}\\
    &\leq \exp\Bigg\{\sum_{j=k+1}^iA^2\left(\frac{1}{j-1} - \frac{1}{j}\right)\Bigg\} \leq e^{A^2/k},
\end{align*}
and 
\begin{align*}
    \sum_{k=1}^{\lfloor 1/B^{2s+1} \rfloor-1} \frac{1}{k^2}e^{A^2/k} \leq \int_1^{\lfloor 1/B^{2s+1} \rfloor} \frac{1}{x^2}e^{A^2/x} dx \leq \frac{e^{A^2}}{A^2}.
\end{align*}
Thus, using Lemma \ref{lem:inq} by taking $i_0=1$ and $i = \lfloor 1/B^{2s+1} \rfloor$, we know
\begin{align}
    e_{\lfloor 1/B^{2s+1} \rfloor} \leq \tilde C_1\|f\|^2 + \tilde C_2 \sigma^2, \label{stage2:fixed}
\end{align}
where $\tilde C_1 = e^{1.5A^2}$, $\tilde C_2 = e^{A^2}/A^2 + A^2/\lfloor 1/B^{2s+1} \rfloor^2$, and $\|f\|^2$ is a constant that depends implicitly on $p$ since $f$ has $p$ component functions. When $i > \lfloor 1/B^{2s+1} \rfloor$, we know $Bi^{1/(2s+1)}/2 \leq J_i \leq Bi^{1/(2s+1)}$. Continuing from \eqref{ineq:ei}, we have
\begin{align*}
    e_i & \leq \left(1-\frac{C_1A}{i}+\frac{2pC_2BM^2A^2}{i^{1+\frac{2s}{2s+1}}}\right)e_{i-1}+ \frac{\tilde C_3\left(p\sigma^2+\sum_{k=1}^p\|g_k\|_s^2\right)}{i^{1+\frac{2s}{2s+1}}} + \frac{\tilde C_4 \normx{g-f}^2}{i}\\
    & \leq \left(1-\frac{1}{i}\right)e_{i-1}+ \frac{\tilde C_3\left(p\sigma^2+\sum_{k=1}^p\|g_k\|_s^2\right)}{i^{1+\frac{2s}{2s+1}}} + \frac{\tilde C_4 \normx{g-f}^2}{i},
\end{align*}
where $\tilde C_3 =  \max\{2C_2BM^2A^2, 2^{2s+3}AC_1C_2/B^{2s}\}$ and $\tilde C_4 = 8AC_1\big(1+\sqrt{C_2/C_1}\big)^2$, and $A$ and $B$ satisfy  $A \geq \frac{2}{C_1}$ and $B \leq \frac{1}{2pC_2M^2A^2}$, respectively. We use Lemma \ref{lem} again by taking $i_0 = \lfloor p^{1+1/2s}\rfloor+1$ and $i=n$. We obtain
\begin{align}
    e_n &\leq e_{\lfloor p^{1+1/2s}\rfloor}\frac{p^{1+1/2s}}{n} + \frac{n - p^{1+1/2s}-1}{n}\tilde C_4 \normx{g-f}^2 + \sum_{i=\lfloor p^{1+1/2s}\rfloor+1}^{n-1}\frac{\tilde C_3\left(p\sigma^2+\sum_{k=1}^p\|g_k\|_s^2\right)}{ni^{\frac{2s}{2s+1}}} \notag\\
    &\leq  \frac{p^{1+1/2s}}{n}\Big(C_1\|f\|^2 + \tilde C_2 \sigma^2\Big) +\tilde C_4 \normx{g-f}^2 + \tilde C_3\left(p\sigma^2+\sum_{k=1}^p\|g_k\|_s^2\right)(2s+1)n^{-\frac{2s}{2s+1}}, \label{en:fixed}
\end{align}
where for the last inequality we use \eqref{stage2:fixed} and the fact that $\frac{1}{n}\sum_{i=1}^n i^{-\frac{2s}{2s+1}} \leq (2s+1)n^{-\frac{2s}{2s+1}}$.
The inequality \eqref{en:fixed} holds for $g = \tau + \sum_{k=1}^p g_k$ with any $g_k \in \mathcal{H}_1(s)$ for  $k=1,2,\dots,p$ and any $\tau \in \RR$.

So we conclude that there exists a constant $C$ such that
\begin{equation*}
    \EE \left[\norm{\hat{f}_n - f}^2 \right] \leq C\inf_{g\in \mathcal{H}_p(s)} \Bigg\{ \|g-f\|^2 + \left(\sigma^2+\|f\|^2+\sum_{k=1}^p\|g_k\|^2_s \right) n^{-\frac{2s}{2s+1}}\Bigg\}.
\end{equation*}
Here $C$ depends on $p,C_1,C_2,M$, and $s$.
\end{proof}

\section*{Funding}
Klusowski was supported in part by the National Science Foundation through CAREER DMS-2239448.

\section*{Acknowledgments}
The authors would like to thank the Associate Editor and anonymous referees for their helpful comments and suggestions that improved the presentation of the paper.

\section*{Supplementary Material}

The supplementary material \citep{chen2025supplement} contains the full proof of Theorem \ref{thm:polynomial}.

\bibliographystyle{plainnat}
\bibliography{ref}

\newpage

\setcounter{page}{1}
\renewcommand{\thepage}{\arabic{page}}
\setcounter{section}{0}
\setcounter{equation}{0}
\setcounter{theorem}{0}

\renewcommand{\thesection}{S\arabic{section}}
\renewcommand{\theequation}{S\arabic{equation}}
\renewcommand{\thetheorem}{S\arabic{theorem}}

\begin{center}
\Large\textbf{Supplement to ``Stochastic Gradient Descent for Nonparametric Additive Regression''}
\end{center}

\vspace{1em}

\noindent In this supplement, we provide the proof of Theorem \ref{thm:polynomial}.

\begin{proof}[Proof of Theorem \ref{thm:polynomial}]
Note that in Theorem \ref{thm:polynomial}, we assume $f$ belongs to a multivariate Sobolev ellipsoid $\tilde{\mathcal{W}}_p(s,Q, {\psi_j})$. Using the same reasoning as in the proof of Theorem \ref{theorem:l2loss}, we can modify \eqref{ineq:general_recursion} to:
\begin{align}
    e_i \leq e_{i-1} - 2\gamma_iy_{i-1}^2 + \gamma_i^2\left(\sigma^2 + e_{i-1}\right)C_2J_iM^2. \label{ineq:recur1}
\end{align}
We begin by providing a simple increasing bound for $e_i$, which will be used to establish the desired polynomial decay rate. Specifically, We  show that 
\begin{align}
    e_i \leq 2C_2Q^2\log(i+2). \label{ei_logrithmic}
\end{align}
We use the induction to prove this results. It is straightforward to see $\eqref{ei_logrithmic}$ holds for $i=0$ since $\|f\| \leq C_2Q^2$. Suppose \eqref{ei_logrithmic} holds for $i-1$. Note that 
\begin{align}
    e_i &\leq e_{i-1} + \gamma_i^2\left(\sigma^2 + e_{i-1}\right)C_2J_iM^2\\
    & \leq 2C_2Q^2\log(i+1) + 2A^2C_2M^2i^{-\frac{8s+1}{6s+1}}(\sigma^2 + 2C_2Q^2\log(i+1))\\
    & \leq 2C_2Q^2\log(i+1) + \frac{2C_2Q^2}{3}\log(i+1)i^{-\frac{8s+1}{6s+1}}\\
    & \leq 2C_2Q^2\log(i+1) + \frac{2C_2Q^2}{3}\log(i+1)i^{-\frac{5}{4}}\\
    & \leq 2C_2Q^2\log(i+1) + \frac{2C_2Q^2}{i+2} \leq 2C_2Q^2\log(i+2),
\end{align}
where for the third inequality we use the selection of $A$ such that $A^2 \leq 1/14C_2M^2\max\{\sigma^2, 1\}$. This proves \eqref{equa:fi-f}.

Given that $f$ belongs to $\tilde{\mathcal{W}}_p(s,Q, {\psi_j})$, we can set $g = f$ in Lemma \ref{lem:general_bound} and leverage the fact that $\|f\|_s^2 < Q^2$. This adaptation modifies \eqref{ineq:general_bound} to:
\begin{align}
v_{i-1}y_{i-1}^2
\geq~\frac{e_{i-1}^2}{2} - \frac{4e_{i-1}C_2Q^2}{J_i^{2s}}. \label{ineq:general_bound2}
\end{align}
Next, we will establish an upper bound for $v_i$. Note that
\begin{align}
     v_i &= \EE\Bigg[\int \left(\hat{f}_{i}(\boldsymbol{x})-f(\boldsymbol{x})\right)^2 d\boldsymbol{x}\Bigg] \notag\\
     & = \EE\Bigg[\int \left(\hat{f}_{i-1}(\bx) - f(\boldsymbol{x})+\gamma_ir_i\Bigg(\sum_{j=1}^{J_i}\psi_{j}\left(\bX_i\right)\psi_{j}(\bx)\Bigg)\right)^2 d\boldsymbol{x}\Bigg]\notag\\
     &=v_{i-1}+2\gamma_i \EE\Bigg[r_i \int \Bigg(\hat{f}_{i-1}(\bx) - f(\boldsymbol{x})\Bigg)\Bigg(\sum_{j=1}^{J_i}\psi_{j}\left(\bX_i\right)\psi_{j}(\bx) \Bigg)d\boldsymbol{x}  \Bigg] + \gamma_i^2 \EE \Bigg[r_i^2\sum_{j=1}^{J_i}\psi_j^2(\bX_i) \Bigg]. \label{ineq:v1}
\end{align}
Using the same reasoning as employed in deriving the inequality \eqref{ineq:term3}, we know 
\begin{align}
   \gamma_i^2 \EE \Bigg[r_i^2\sum_{j=1}^{J_i}\psi_j^2(\bX_i) \Bigg] \leq \gamma_i^2(\sigma^2 + e_{i-1})J_iM^2. \label{ineq:v2} 
\end{align}
In addition, by the Cauchy-Schwarz inequality,
\begin{align}
    &\EE\Bigg[r_i \int \Bigg(\hat{f}_{i-1}(\bx) - f(\boldsymbol{x})\Bigg)\Bigg(\sum_{j=1}^{J_i}\psi_{j}\left(\bX_i\right)\psi_{j}(\bx) \Bigg)d\boldsymbol{x}  \Bigg] \notag\\
    &=~ \EE \Bigg[ \sum_{j=1}^{J_i}\int \Big(\hat{f}_{i-1}(\bx) - f(\boldsymbol{x})\Big)\psi_j(\bx)p_{\bX}(\bx) d\bx \int \Big(\hat{f}_{i-1}(\bx) - f(\boldsymbol{x})\Big)\psi_j(\bx) d\bx  \Bigg]\notag\\
    &\leq~ y_{i-1}\sqrt{b_{i-1}}. \label{ineq:v3}
\end{align}
Thus, according to \eqref{ineq:v1}, \eqref{ineq:v2} and \eqref{ineq:v3}, we have
\begin{align}
    v_i \leq v_{i-1} + 2\gamma_iy_{i-1}\sqrt{v_{i-1}} + M^2\gamma_i^2J_i(\sigma^2 + e_{i-1}).
\end{align}
Let $\tilde v_i = \max\{v_i, v_{i-1},\dots,v_0\}$. We use induction to show
\begin{align}
    \tilde v_i \leq \tilde v_{i-1} + 2\gamma_iy_{i-1}\sqrt{\tilde v_{i-1}} + M^2\gamma_i^2J_i(\sigma^2 + e_{i-1}). \label{ineq: recur2}
\end{align}
Note that \eqref{ineq: recur2} holds for $i=0$. Suppose \eqref{ineq: recur2} holds for $i=k-1$. Then
\begin{align}
    \tilde v_{k-1} \leq \tilde v_{k-2} + 2\gamma_iy_{i-1}\sqrt{\tilde v_{k-2}} + M^2\gamma_i^2J_i(\sigma^2 + e_{i-1}) \leq \tilde v_{k-1} + 2\gamma_iy_{i-1}\sqrt{\tilde v_{k-1}} + M^2\gamma_i^2J_i(\sigma^2 + e_{i-1}), 
\end{align}
since $\{\tilde v_i\}$ is non-decreasing. Furthermore,
\begin{align*}
    v_k \leq  v_{k-1} + 2\gamma_iy_{i-1}\sqrt{v_{k-1}} + M^2\gamma_i^2J_i(\sigma^2 + e_{i-1})   \leq \tilde v_{k-1} + 2\gamma_iy_{i-1}\sqrt{\tilde v_{k-1}} + M^2\gamma_i^2J_i(\sigma^2 + e_{i-1}).
\end{align*}
Since $\tilde v_k = \max\{v_k, \tilde v_{k-1}\}$, we know \eqref{ineq: recur2} also holds for $i=k$. In addition, according to \eqref{ineq:general_bound2}, we have
\begin{align}
\tilde v_{i-1}y_{i-1}^2
\geq~\frac{e_{i-1}^2}{2} - \frac{4e_{i-1}C_2Q^2}{J_i^{2s}}. \label{ineq:tilde_v}
\end{align}
In the following analysis, we will utilize \eqref{ineq:recur1}, \eqref{ineq: recur2}, and \eqref{ineq:tilde_v} to demonstrate that $e_n = O\left(n^{-\frac{s-1/2}{6s+1}}\right)$. Let $u_i = e_i/\tilde v_i$. We will first show $u_n = O\left(n^{-\frac{2s}{6s+1}}\right)$. By \eqref{ineq:recur1} and \eqref{ineq:tilde_v}, we know 
\begin{align*}
    e_{i} \leq (1+C_2M^2\gamma_i^2J_i)e_{i-1} - 2\gamma_i\Bigg( \frac{e_{i-1}^2}{2\tilde v_{i-1}} - \frac{4e_{i-1}C_2Q^2}{J_i^{2s}\tilde v_{i-1}}\Bigg) + \sigma^2C_2M^2\gamma_i^2J_i.
\end{align*}
Since $\tilde v_i$ is non-decreasing, it follows that
\begin{align}
    u_i &\leq  (1+C_2M^2\gamma_i^2J_i)\frac{e_{i-1}}{\tilde v_{i-1}} - \gamma_i \frac{e_{i-1}^2}{\tilde v_{i-1}^2} + \frac{8\gamma_ie_{i-1}C_2Q^2}{J_i^{2s}\tilde v_{i-1}^2} + \frac{\sigma^2C_2M^2\gamma_i^2J_i}{\tilde v_{i-1}}\notag\\
    &\leq u_{i-1}\Bigg(1+ \frac{8AC_2Q^2}{\tilde v_0 }i^{-1} + 2C_2A^2M^2i^{-\frac{8s+1}{6s+1}} - Au_{i-1}i^{-\frac{4s+1}{6s+1}}\Bigg) + \frac{C_2A^2M^2\sigma^2}{\tilde v_0 } i^{-\frac{8s+1}{6s+1}}, \label{ineq:u1} 
\end{align}
where for the second inequality, we specify $\gamma_i = Ai^{-(4s+1)/(6s+1)}$ and $J_i = \lceil i^{\frac{1}{6s+1}} \rceil$. Also, we use the facts that $i^{\frac{1}{6s+1}} \leq J_i \leq 2i^{\frac{1}{6s+1}}$ and $\tilde v_{i-1}\geq \tilde v_0$. From \eqref{ineq:u1}, we have
\begin{align}
    i^{\frac{4s+1}{6s+1}}(u_i - u_{i-1}) &\leq i^{-\frac{2s}{6s+1}}\bigg(\frac{8AC_2Q^2}{\tilde v_0 } + 2C_2A^2M^2 \bigg)u_{i-1} - Au_{i-1}^2 + \frac{C_2A^2M^2\sigma^2}{\tilde v_0} i^{-\frac{4s}{6s+1}}\label{ineq:u_recur}
\end{align}
which implies
\begin{align}
    \sum_{i=1}^ni^{\frac{4s+1}{6s+1}}(u_i - u_{i-1}) \leq \bigg(&\frac{8AC_2Q^2}{\tilde v_0 } + 2C_2A^2M^2 \bigg)u_0 +  \sum_{i=2}^n (i-1)^{-\frac{2s}{6s+1}}\bigg(\frac{8AC_2Q^2}{\tilde v_0 } + 2C_2A^2M^2 \bigg)u_{i-1} \\
    &- Au_{i-1}^2 + \frac{C_2A^2M^2\sigma^2}{\tilde v_0} i^{-\frac{4s}.{6s+1}}\label{ineq:u_recur_sum}
\end{align}
Note that 
\begin{align}
    \sum_{i=1}^n i^{\frac{4s+1}{6s+1}}(u_i - u_{i-1}) &= n^{\frac{4s+1}{6s+1}}u_n - \sum_{i=1}^n \bigg(i^{\frac{4s+1}{6s+1}} - (i-1)^{\frac{4s+1}{6s+1}}\bigg)u_{i-1}\notag\\
    & \geq n^{\frac{4s+1}{6s+1}}u_n - u_0 - \sum_{i=2}^n \frac{4s+1}{6s+1}(i-1)^{-\frac{2s}{6s+1}}  u_{i-1} \label{ineq:u_sum} 
\end{align}
where for the inequality, we use the fact that
\begin{align*}
    i^{\frac{4s+1}{6s+1}} - (i-1)^{\frac{4s+1}{6s+1}} \leq  \frac{4s+1}{6s+1}(i-1)^{-\frac{2s}{6s+1}}.
\end{align*}
According to \eqref{ineq:u_recur_sum} and  \eqref{ineq:u_sum}, we know
\begin{align}
    n^{\frac{4s+1}{6s+1}}u_n &\leq \bigg(\frac{8C_2AQ^2}{\tilde v_0} + 2C_2A^2M^2 + 1 \bigg)u_0 +  \sum_{i=2}^n (i-1)^{-\frac{2s}{6s+1}}\bigg(\frac{8C_2AQ^2}{\tilde v_0} + 2C_2A^2M^2 + \frac{4s+1}{6s+1} \bigg)u_{i-1} \\&~~~~-A\sum_{i=1}^n u_{i-1}^2 + \sum_{i=1}^n\frac{C_2A^2M^2\sigma^2}{\tilde v_0} i^{-\frac{4s}{6s+1}}\\
    &\leq \frac{1}{4A} \bigg(\frac{8C_2AQ^2}{\tilde v_0} + 2C_2A^2M^2+\frac{4s+1}{6s+1} \bigg)^2   \sum_{i=2}^n (i-1)^{-\frac{4s}{6s+1}} + \frac{C_2A^2M^2\sigma^2}{\tilde v_0}\sum_{i=1}^n i^{-\frac{4s}{6s+1}} \\&~~~~+ \frac{1}{4A} \bigg(\frac{8C_2AQ^2}{\tilde v_0} + 2C_2A^2M^2+1 \bigg)^2\\
    &\leq \tilde C_1 n^{\frac{2s+1}{6s+1}},
\end{align}
where 
\begin{align*}
    \tilde C_1 &= \frac{6s+1}{2s+1}\left(\frac{1}{4A} \bigg(\frac{8C_2AQ^2}{\tilde v_0} + 2C_2A^2M^2+\frac{4s+1}{6s+1} \bigg)^2 + \frac{C_2A^2M^2\sigma^2}{\tilde v_0} \right).
\end{align*}
Here we use the inequalities $$\sum_{i=1}^n i^{-\frac{4s}{6s+1}} \leq \int_0^n x^{-\frac{4s}{6s+1}}~dx = \frac{6s+1}{2s+1}n^{\frac{2s+1}{6s+1}},$$ and
$$
\frac{1}{4A} \bigg(\frac{8C_2AQ^2}{\tilde v_0} + 2C_2A^2M^2+1 \bigg)^2 \leq \tilde C_1.
$$
Thus, $u_n \leq  \tilde C_1n^{-\frac{2s}{6s+1}}$. Let $r_i = e_i\tilde v_i$. Next, we will show $r_n = O\left(n^{\frac{1}{4}}\right)$. Specifically, we will use induction to show $r_i \leq Bi^{\frac{1}{4}}$ for any $i \geq 1$ where $B = 4C_2Q^2(Q^2+1)$. From \eqref{ei_logrithmic}, we know $e_1 \leq 4C_2Q^2$. Note that, by the Cauchy-Schwarz inequality, we have
\begin{align}
    y_{i-1}^2 = \sum_{j=1}^{J_i}\mathbb{E}\left[\left|\innerx{\hat{f}_{i-1}-f}{\psi_{j}}\right|^2\right] \leq C_2J_ie_{i-1}.
\end{align}
Thus,  
\begin{align}
    v_1 &\leq v_0 + 2\gamma_1\sqrt{C_2J_1e_0v_0} + M^2\gamma_1^2J_1(\sigma^2 + e_0)\\
    & \leq Q^2 + 4AC_2Q^2 + A^2M^2\big(\sigma^2 + C_2Q^2\big)\\ &\leq Q^2+1
\end{align}
provided that $4AC_2Q^2 + A^2M^2\big(\sigma^2 + C_2Q^2\big) \leq 1$. Here, we use the facts that $v_0 \leq  Q^2$ and $e_0 = \|f\| \leq  C_2Q^2$ in the  inequalities above. Therefore, the base case holds since $r_1 = e_1\tilde v_1 \leq B$.   

Suppose that $r_i \leq Bi^{\frac{1}{4}}$ for each $1\leq i \leq k$. It follows from \eqref{ineq: recur2} that for each $1\leq i\leq k$
\begin{align}
    \tilde v_i &\leq \tilde v_{i-1} + 2\sqrt{C_2\gamma_i^2J_{i}e_{i-1}\tilde v_{i-1}} +  M^2\gamma_i^2J_i(\sigma^2 + e_{i-1}) \notag\\
    &\leq \tilde v_{i-1} + \tilde C_2i^{-\frac{1}{2}-\frac{s/4 - 1/8}{6s+1}}, \label{ineq: temp1}
\end{align}
where 
\begin{align*}
    \tilde C_2 = 2\sqrt{2C_2B} + 1.
\end{align*}
Here we use the fact that $$M^2\gamma_i^2J_i(\sigma^2 + e_{i-1}) \leq 2A^2M^2i^{-\frac{8s+1}{6s+1}}\big(\sigma^2 + 2C_2Q^2\log(i+1)\big) \leq i^{-\frac{1}{2}-\frac{s/4 - 1/8}{6s+1}},$$
when $A^2 \leq 1/14M^2\max\{\sigma^2, 2C_2Q^2\}$.  Sum over $i=1$ to $k$ in \eqref{ineq: temp1}, we know 
\begin{align}
    \tilde v_k \leq \tilde C_3 k^{\frac{1}{2}-\frac{s/4 - 1/8}{6s+1}}, \label{ineq:vk}
\end{align}
where $\tilde C_3 = 3\tilde C_2 + Q^2 = 6\sqrt{2C_2B} + 3 + Q^2$ since $\tilde v_0 = v_0 \leq Q^2$. On the other hand, by continuing to apply \eqref{ineq: recur2} and using the inequality $\frac{2y_{i-1}}{\sqrt{\tilde v_{i-1}}} \leq \frac{2y_{i-1}^2}{e_{i-1}} + \frac{e_{i-1}}{2\tilde v_{i-1}}$, we obtain
\begin{align}
    \tilde v_i \leq \tilde v_{i-1}\left(1+ \frac{2\gamma_iy_{i-1}^2}{e_{i-1}}\right) + \frac{\gamma_i e_{i-1}}{2}  + M^2\gamma_i^2J_i(\sigma^2 + e_{i-1}). \label{ineq:temp2}
\end{align}
Taking $i = k+1$ in \eqref{ineq:recur1} and  \eqref{ineq:temp2}, we have
\begin{align}
    e_{k+1} &\leq e_k\left(1- \frac{2\gamma_{k+1}y_k^2}{e_k} \right) + 2A^2C_2M^2k^{-\frac{8s+1}{6s+1}}\big(\sigma^2 + 2C_2Q^2\log(k+2)\big), \label{ineq:temp3}\\
    \tilde v_{k+1} &\leq \tilde v_{k}\left(1+ \frac{2\gamma_{k+1}y_{k}^2}{e_{k}}\right) + \frac{\gamma_{k+1} e_{k}}{2}  + 2A^2M^2k^{-\frac{8s+1}{6s+1}}\big(\sigma^2 + 2C_2Q^2\log(k+2)\big). \label{ineq:temp4}
\end{align}
By multiplying \eqref{ineq:temp3} 
and \eqref{ineq:temp4} together, and using the facts that $e_k^2 = u_kr_k$, $C_2 \geq 1$, and the selection of $A^2$ such that 
$$
2A^2M^2k^{-\frac{8s+1}{6s+1}}\big(\sigma^2 + 2C_2Q^2\log(k+2)\big) \leq C_2Q^2\log(k+2),
$$
we obtain
\begin{align}
    r_{k+1} &\leq r_k + \frac{1}{2} \gamma_{k+1}u_kr_k +  2A^2C_2M^2k^{-\frac{8s+1}{6s+1}}\big(\sigma^2 + 2C_2Q^2\log(k+2)\big)\tilde v_{k}\left(1+ \frac{2\gamma_{k+1}y_{k}^2}{e_{k}}\right)\\
    &~~~~+ 2A^2C_2M^2k^{-\frac{8s+1}{6s+1}}\big(\sigma^2 + 2C_2Q^2\log(k+2)\big)\bigg(3C_2Q^2\log(k+2) + AC_2Q^2i^{-\frac{4s+1}{6s+1}}\bigg).\label{recursion_rk_1}
\end{align}
Note that $2\gamma_{k+1}y_k^2/e_k \leq 2C_2\gamma_{k+1}J_{k+1} \leq 4AC_2(k+1)^{-4s/(6s+1)} \leq 1$ when $A \leq 1/4C_2$. Continuing from \eqref{recursion_rk_1}, and applying \eqref{ineq:vk}, the inequality $u_k \leq \tilde C_1 k^{-2s/(6s+1)} \leq 2^{1/3}\tilde C_1(k+1)^{-2s/(6s+1)}$, and the induction hypothesis $r_k \leq Bk^{\frac{1}{4}}$, we can get 
\begin{align}
    r_{k+1} &\leq Bk^{\frac{1}{4}}+ 2^{-1+\frac{1}{3}}A\tilde C_1B(k+1)^{-\frac{3}{4}} \\
    &~~~~+ 4A^2C_2M^2\big(6\sqrt{2C_2B} + 3 + Q^2\big)k^{\frac{-21s/4 - 3/8}{6s+1}}\big(\sigma^2 + 2C_2Q^2\log(k+2)\big)\\
    &~~~~+ 2A^2C_2M^2k^{-\frac{8s+1}{6s+1}}\big(\sigma^2 + 2C_2Q^2\log(k+2)\big)\bigg(3C_2Q^2\log(k+2) + AC_2Q^2i^{-\frac{4s+1}{6s+1}}\bigg)\label{recursion_rk_2}
\end{align}
When $A$ is sufficiently small, for each $k \geq 1$, we have,
\begin{align}
    &4A^2C_2M^2\big(6\sqrt{2C_2B} + 3 + Q^2\big)k^{\frac{-21s/4 - 3/8}{6s+1}}\big(\sigma^2 + 2C_2Q^2\log(k+2)\big)\\ 
    &\leq~ \frac{1}{3}\left(\frac{1}{4} - \frac{2^{\frac{1}{3}}(4s+1)^2}{8(2s+1)(6s+1)}\right)B(k+1)^{-\frac{3}{4}},\label{rk_1}
\end{align}
because, for $s > 1/2$, we know that $(21s/4+3/8)/(6s+1) > 3/4$ and 
$$
\frac{1}{4} - \frac{2^{\frac{1}{3}}(4s+1)^2}{8(2s+1)(6s+1)} > 0.
$$
In addition, when $A$ is sufficiently small, for each $k \geq 1$,
\begin{align}
    &2A^2C_2M^2k^{-\frac{8s+1}{6s+1}}\big(\sigma^2 + 2C_2Q^2\log(k+2)\big)\bigg(3C_2Q^2\log(k+2) + AC_2Q^2i^{-\frac{4s+1}{6s+1}}\bigg)\\
    &\leq~ \frac{1}{3}\left(\frac{1}{4} - \frac{2^{\frac{1}{3}}(4s+1)^2}{8(2s+1)(6s+1)}\right)B(k+1)^{-\frac{3}{4}}.\label{rk_2}
\end{align}
Furthermore, when $A$ is sufficiently small, for each $k \geq 1$, it holds that
\begin{align}
    &2^{-1+\frac{1}{3}}A\tilde C_1B(k+1)^{-\frac{3}{4}} \\
    &=~ \frac{2^{\frac{1}{3}}(6s+1)}{2(2s+1)}\left(\frac{1}{4} \bigg(\frac{8C_2AQ^2}{\tilde v_0} + 2C_2A^2M^2+\frac{4s+1}{6s+1} \bigg)^2 + \frac{C_2A^3M^2\sigma^2}{\tilde v_0} \right)B(k+1)^{-\frac{3}{4}}\\
    &\leq~ \frac{2^{\frac{1}{3}}(4s+1)^2}{8(2s+1)(6s+1)}B(k+1)^{-\frac{3}{4}} + \frac{1}{3}\left(\frac{1}{4} - \frac{2^{\frac{1}{3}}(4s+1)^2}{8(2s+1)(6s+1)}\right)B(k+1)^{-\frac{3}{4}}. \label{rk_3} 
\end{align}
Plugging in \eqref{rk_1}, \eqref{rk_2}, and \eqref{rk_3} into \eqref{recursion_rk_2}, we have
\begin{align}
    r_{k+1} \leq Bk^{\frac{1}{4}} + \frac{1}{4}B(k+1)^{-\frac{3}{4}} \leq B(k+1)^{\frac{1}{4}}.
\end{align}
This concludes the induction. Therefore, we obtain
\begin{equation}
e_n = \sqrt{u_nr_n}  = O\Big(n^{\frac{-s/4+1/8}{6s+1}}\Big),
\end{equation}
as desired.
\end{proof}

\end{document}